\newcommand*{\addFileDependency}[1]{%
  \typeout{(#1)}
  \@addtofilelist{#1}
  \IfFileExists{#1}{}{\typeout{No file #1.}}
}
\newcommand{\B}[1]{\boldsymbol{#1}}
\newcites{Main,Appendix}{References,References}
\newcommand{\indep}{\perp \!\!\! \perp}
\newcites{Main,Appendix}{References,References (Appendices)}
\newcommand{\newreptheorem}[2]{\newtheorem*{rep@#1}{\rep@title}\newenvironment{rep#1}[1]{\def\rep@title{#2 \ref*{##1}}\begin{rep@#1}}{\end{rep@#1}}}
\newcommand{\wass}{\text{Wass}}
\newcommand{\mmd}{\text{MMD}}
\newtheorem{definition}{Definition}
\newtheorem{theorem}{Proposition}
\newtheorem{corollary}{Corollary}[theorem]
\begin{document}

\runningauthor{Oscar Clivio, Fabian Falck, Brieuc Lehmann, George Deligiannidis, Chris Holmes}

\twocolumn[
\aistatstitle{Neural Score Matching for High-Dimensional Causal Inference}

\aistatsauthor{ Oscar Clivio$^{1}$ \hspace{1.8cm} Fabian Falck$^{1}$ }
\aistatsauthor{ Brieuc Lehmann$^{2}$ \hspace{1.2cm}
George Deligiannidis$^{1}$ \hspace{1.2cm} Chris Holmes$^{1,3}$ }

\aistatsaddress{ $^{1}$University of Oxford \hspace{0.4cm} $^{2}$University College London \hspace{0.4cm} $^{3}$Alan Turing Institute }
]

\begin{abstract}
Traditional methods for matching in causal inference are impractical for high-dimensional datasets. They suffer from the curse of dimensionality: exact matching and coarsened exact matching find exponentially fewer matches as the input dimension grows, and propensity score matching may match highly unrelated units together. 
To overcome this problem, we develop theoretical results which motivate the use of neural networks to obtain non-trivial, multivariate balancing scores of a chosen level of coarseness, in contrast to the classical, scalar propensity score. 
We leverage these balancing scores to perform matching for high-dimensional causal inference and call this procedure \emph{neural score matching}.
We show that our method is competitive against other matching approaches on semi-synthetic high-dimensional datasets, both in terms of treatment effect estimation and reducing imbalance.

\end{abstract}

\section{INTRODUCTION}
\label{sec:Introduction}

Estimating the causal effect of a treatment or a policy is the fundamental task of causal inference. 
For binary treatments, the quantity of interest is the difference between the outcome of a subject receiving a treatment (a \textit{treated} subject) and the outcome of that subject in the absence of treatment (a \textit{control} subject). 
The main difficulty in estimating a causal effect from observational data is that one cannot observe the outcome of both the true and the alternative scenario for the same subject -- also called the factual and counterfactual outcomes. 
For instance, to evaluate the effect of a lockdown on reducing infection case numbers in a given country, one cannot create an exact copy of that country to study the consequences of its absence. 

One possible solution would be to find a country that is very similar to the country under study, yet which did not experience a lockdown. 
This is the general idea behind \textit{matching} whereby each treated subject in the sample data is assigned to one or more subjects from the control group~\citepMain{stuart2010matching}. 
Matching is among the dominant techniques used in medicine and other domains to estimate the effect of a treatment from observational data \citepMain{su2019effect,farzadfar2012effectiveness,razonable2021casirivimab,webb2020outcomes}. 
Besides estimating the treatment effect, matching can serve additional objectives. 
For example, matching can reduce imbalance%
, i.e. distributional differences between the treated and control groups that indicate confounding and consequently make treatment effect estimation more difficult. 
Matching can also help to decrease costs by reducing the number of control samples required when the collection of data (e.g. subjects' outcome) is expensive~\citepMain{stuart2010matching}. 
Matching methods, however, generally suffer from the \textit{curse of dimensionality}~\citepMain{abadie2006large, Roberts2020textmatching}, rendering them impractical for many modern high-dimensional datasets, such as electronic health records or clinical images. 

\begin{figure*}[t]
    \centering
    \includegraphics[width=\linewidth]{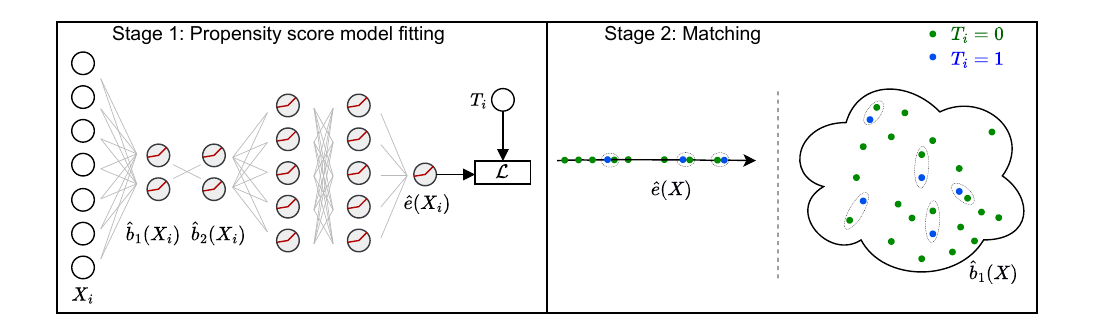}
    \caption{An illustration of neural score matching. 
    In the first stage, a propensity score model is fitted to obtain low-dimensional balancing scores. 
    In the second stage, samples are matched (to one neighbour) in the balancing score space based on a given distance metric. 
    Matched samples are subsequently used to estimate the ATT.}
    \label{fig:intuition}
\end{figure*}

In this work, we address the curse of dimensionality by first compressing the input covariates into a lower-dimensional matching space with a neural network and then matching in this space. 
Our contributions are as follows: 
(a) We develop novel theoretical results that bound the imbalance in the original covariate space via imbalance in a lower-dimensional balancing score space. 
We also extend these results to functions of covariates that violate the balancing score condition and which we refer to as ``non-balancing scores''. %
(b) These theoretical results motivate \textit{neural score matching}, a procedure to match on low-dimensional balancing scores obtained from the intermediate layers of a neural network modelling the propensity score. 
This yields a simple method for estimating average or group-based treatment effects in the presence of high-dimensional covariates without regressing on outcomes.
The intuition of neural score matching is illustrated in Fig.~\ref{fig:intuition}.
(c) We show that neural score matching is competitive against other matching methods on two causal inference benchmarks in terms of calibration error, treatment effect estimation and balance.

\section{MATCHING IN CAUSAL INFERENCE}
\label{sec:Matching in causal inference}

\subsection{Problem Setup}
\label{sec:Problem setup}

Let $(X_i,T_i,Y_i) \sim P$ be a dataset where $X_i$ denotes (pre-treatment) covariates, $T_i$ is the binary variable indicating whether the treatment under scrutiny has been applied to the subject or not, and $Y_i$ is the observed outcome after the treatment or absence of treatment, all corresponding to subject $i$. 
In the potential outcomes framework~\citepMain{Rubin2005po}, $Y_i(1)$ is the outcome which would have happened (is ``potential'') if $T_i = 1$, and $Y_i(0)$ is the analogous outcome for when $T_i = 0$. 
Then, $Y_i = T_iY_i(1) + (1-T_i)Y_i(0)$. 
We denote $N_t$ as the number of treated units in the dataset, and $N_c$ the number of control units. 
Our task is to estimate the \textit{average treatment effect on the treated (ATT)}, defined as
\begin{align*}
    \text{ATT} = \mathbb{E}[Y(1) - Y(0) \mid T=1].
\end{align*}
This quantity measures the treatment effect for patients under treatment, and is typically the primary interest of medical applications \citepMain{Ho2007att}. 
Here, covariates, such as age or BMI that are related to a treatment are of particular interest \citepMain{greifer2021att}. 
The ATT can be approximated by the \textit{sample average treatment effect on the treated (SATT)}, defined as
\begin{align*}
    \text{SATT} = \frac{1}{N_t}\sum_{i : T_i = 1} Y_i(1) - Y_i(0).
\end{align*}
We make the following standard assumptions :
\begin{itemize}
    \item Consistency: $\forall t, \ T_i = t \implies Y_i(t) = Y_i$.
    \item Ignorability: $Y_i(1),Y_i(0) \indep T_i | X_i$.
    \item Overlap: $\forall \B{x}, \ 0 < P(T_i=1|X_i=\B{x}) < 1$.
\end{itemize}
Consistency ensures that $Y_i(1)$ is the observed outcome $Y_i$ when $T_i = 1$. 
However, $Y_i(0)$ is not observed and must be estimated, for instance through matching.
In addition, the ATT can also be expressed using \textit{conditional average treatment effects} as
\begin{align}
    \text{ATT} = \mathbb{E}_X\Big[ \mathbb{E}[Y|T=1,X] - \mathbb{E}[Y|T=0,X] \Big| T=1 \Big],
    \label{eq:att_conditional}
\end{align}
which can be approximated by taking the sample mean over units as
\begin{align}
\label{eq:att_gt}
   \text{ATT} \approx \frac{1}{N_t}\sum_{i : T_i = 1} \mathbb{E}[Y_i|T_i=1,X_i] - \mathbb{E}[Y_i|T_i=0,X_i].
\end{align}
While we focus on the potential outcomes framework in this work, we note that an alternative is Pearl's framework of directed acyclic graphs (DAGs) and structural causal models (SCMs) \citepMain{pearl2009causality}. 

\subsection{Key Concepts}
\label{sec:Key concepts}

In general, a matching procedure generates weights $w_{ij}$ denoting the assignment of one or many control units $j$ to a treated unit $i$  \citepMain[Chapter 5]{MorganWinship2014matchingcha5}. 
Typically, matching only assigns few control units, i.e. for a treated unit $i$, there is a small number of control units $j$ such that $w_{ij} > 0$, and $w_{ij} = 0$, otherwise. 
This yields a new, weighted dataset $(w_i,X_i,T_i,Y_i) \sim P'$, where $w_i = 1$ for all treated units $i$ and $w_j = \sum_{i} (w_{ij} / \sum_{j} w_{ij}$) for control units $j$. %
The matching procedure serves two main goals. One is to estimate the ATT through the following estimator of the potential outcome $Y_i(0)$ :
\begin{align*}
\hat{Y}_i(0) = \frac{1}{\sum_{j: T_j = 0} w_{ij}}\sum_{j: T_j=0} w_{ij}Y_j.
\end{align*}
Another is to obtain \textit{balance} or, when it is not possible, reduce \textit{imbalance} in $P'$ compared to the original distribution $P$. 
Balance occurs when the distributions of covariates $X$ given $T=0$ on the one hand and $T=1$ on the other hand are equal. 
Perfect balance thus corresponds to zero imbalance, and is desirable because it eliminates confounding. In this ideal setting, the treatment effect can then be estimated as the difference between averaged outcomes in both distributions. 
In this sense, the two goals of treatment effect estimation and balance are related.
However, there is also a bias-variance trade-off at stake, as selecting fewer matching units will reduce imbalance and thus the expected treatment estimation error or ``bias'', at the cost of increased variance.

Formally, for a random variable $A$, we refer to the statement
\begin{align*}
    P(A|T=1) = P(A|T=0) 
\end{align*}
as ``balance in $A$'', and, for a function $D$ of two probability distributions, we refer to the quantity
\begin{align*}
    D(P(A|T=1),P(A|T=0)) 
\end{align*}
as ``$D$-imbalance in $A$''. 
When $D$ is a probability distance, e.g. total variation or Wasserstein distance, then a zero $D$-imbalance in $A$ implies balance in $A$. This is not true when $D$ is \textit{not} a probability distance, e.g. linear MMD.
Note to distinguish $D$ from the notation for a distance metric $d$ in Section~\ref{sec:Related work}.
We omit the mention of $D$ or $A$ when obvious from the context.
     
There are different ways to measure imbalance, such as a (standardised) difference in means~\citepMain{Austin2011psm}, integral probability metrics such as the Wasserstein distance, the maximum mean discrepancy (MMD) and the total variation (TV)~\citepMain{Sriperumbudur2012ipms, Kallus2020deepmatch}, or histogram-based $L_1$ distances~\citepMain{Iacus2012CEM}. 
Balance and imbalance can also apply to other variables than covariates, such as transformations of covariates \citepMain{johansson2016learning,Shalit2017estimating,iacus2011multivariate}.

\section{RELATED WORK}
\label{sec:Related work}
We now discuss existing work on matching and alternative approaches in causal inference that aim to reduce imbalance or estimate the ATT.
Most commonly, choosing matched control units $j$ is done through a nearest neighbours search among all control units $j$ according to some distance metric $d(.,.)$~\citepMain{stuart2010matching}. 
Nearest neighbour search can be performed with or without replacement, and additionally, one may enforce a caliper, i.e. a maximal distance between matches.
Alternatively, one might consider all matches simultaneously through an optimisation programme (optimal matching)~\citepMain{Rosenbaum1989optmatching}. 
The choice of the distance metric $d$ differs between common matching techniques:
\begin{itemize}[noitemsep,topsep=0pt,parsep=0pt,partopsep=0pt,leftmargin=*] 
    \item Exact matching~\citepMain{Rosenbaum1985constructing}: $d(X_i,X_j) = \infty$, if $X_i \neq X_j$, and $d(X_i,X_j) = 0$, otherwise.
    \item Coarsened exact matching \citepMain{Iacus2012CEM}: for a function $f$, $d(X_i,X_j) = \infty$, if $f(X_i) \neq f(X_j)$, and $d(X_i,X_j) = 0$, otherwise. 
    $f$ is typically an element-wise function, mapping to some (aggregated) value.
    \item Mahalanobis distance matching~\citepMain{stuart2010matching}: $d(X_i,X_j) = (X_i - X_j)^T\Sigma^{-1}(X_i - X_j)$, where $\Sigma$ is the estimated covariance matrix of the control dataset in the case of ATT estimation.
    \item Propensity score matching~\citepMain{Austin2011psm}: $d(X_i,X_j) = \vert \hat{e}(X_j) - \hat{e}(X_i) \vert$ where $\hat{e}(\B{x})$ is an estimate of the propensity score $e(\B{x}) := P(T=1|X=\B{x})$. 
    This method is based on the property that $X \indep T \mid e(X)$. 
    We provide more details on implications of this property in Section \ref{sec:Balancing scores with regards to the latent space}.
\end{itemize}
Other than coarsened exact matching for which the weights have a different formulation, these methods set $w_{ij} = 1$ for matched units $i$ and $j$, and $w_{ij} = 0$, otherwise.

All the above matching methods suffer from the \textit{curse of dimensionality}, rendering them impractical in high-dimensional datasets. 
In general, theoretical results on nearest neighbour matching, to which the above techniques belong, show that the bias of the resulting ATT estimator grows with the data dimension $D$ at a rate $\mathcal{O}(N^{-r/D})$, where $N$ is the sample size and $r \geq 1$ is a constant~\citepMain{Abadie2006highdimmatching}. 
More precisely, exact matching and coarsened exact matching remove more and more control items as the number of covariates increases. 
Further, matching based on the Mahalanobis distance performs poorly in high dimensions, likely because all covariate interactions are assumed to be equally important~\citepMain{stuart2010matching}.

In the literature, the preferred method for high dimensions is propensity score matching. 
However, compression into a single dimension can lead to matches with very different characteristics in the original covariate space, as for a fixed compression, there is no other information used to choose matches: matching is then done at random. 
This applies to all compressions of covariates, however as the propensity score $p(T=1|X)$ is the coarsest compression which can be used for matching (see Section \ref{sec:Balancing scores with regards to the latent space}), with the least information from $X$, it is most prone to actually matching at random. 
This can increase imbalance and consequently bias~\citepMain{King2019nopsm}.
Other than propensity score methods, approaches for matching in high dimensions include penalised regression techniques such as LASSO to perform variable selection before matching \citepMain{schneeweiss2009high, belloni2013, farrell2015robust}, sufficient dimension reduction \citepMain{Luo2020sdr, cheng2020sufficient}, and distance metric learning \citepMain{Li2016MatchingVD, wang2021flame}. 

An alternative to matching is \textit{weighting}, where weights $w_j$ in the weighted dataset are directly estimated, generalising the problem formulation of matching~\citepMain{Kallus2020gom}.  
Examples include leveraging the estimated propensity score for inverse probability weighting~\citepMain{Horvitz1952ipw} or learning weights directly~\citepMain{Kallus2020deepmatch}. 
A second alternative to matching is \textit{outcome regression}.
These methods estimate the quantity $\mathbb{E}[Y|T=t, X=\B{x}]$ through a regressor $Q(t,\B{x})$ that can be fitted through various methods related to linear regression~\citepMain{imbens_rubin_2015}, tree models~\citepMain{Athey2019generalized}, or neural networks~\citepMain{Shi2019adapting, Shalit2017estimating}. 
Combining weighting through the propensity score estimate and outcome regression leads to the popular doubly robust methods, such as the augmented inverse probability weighted (AIPW) method \citepMain{Robins1994aipw}. 
Recent efforts have been made to recategorise and benchmark outcome regression and doubly robust methods~\citepMain{Curth2021nonparametric}.

\section{NEURAL SCORE MATCHING}
\label{sec:Variational autoencoders for matching}

In this section, we present theoretical results that motivate the use of neural networks to obtain non-trivial, multivariate balancing scores.
This approach aims to address the curse of dimensionality problem, as outlined in the previous section. 
In addition, some of these results have wider applicability to other models than neural networks.

\subsection{Balancing Scores}
\label{sec:Balancing scores with regards to the latent space}

We start by defining and analysing the use of \textit{balancing scores}. 
This notion also motivated propensity score matching~\citepMain{Rosenbaum1983ps}. 

\begin{definition} \label{def:balancing_scores}
A \emph{balancing score} is a function $b$ of $X$ such that $X \indep T \mid b(X)$.
\end{definition}

As a consequence, for a fixed value $\B{\beta}$ of $b(X)$, it holds that
\begin{align*}
    P(X \mid b(X)=\B{\beta}, T=1) = P(X \mid b(X) = \B{\beta}, T=0),
\end{align*}
i.e. the treatment and control distributions in the covariate space are equal for any fixed realisation of $b(X)$. 
Notably, it is possible to show that average treatment effects can be estimated by conditioning on $b(X)$ instead of $X$ in Equation~\eqref{eq:att_conditional} \citepMain{Rosenbaum1983ps}.

We can further connect (im)balance in $b(X)$ to (im)balance in $X$. 
The following Proposition shows that $TV$-imbalance in $X$ is equal to $TV$-imbalance in $b(X)$, where $TV$ is the total variation distance.

\begin{theorem} \label{th:tv_eq_bX}
Let $b$ be a function such that $b(X)$ is a balancing score. 
Then,
\begin{align*}
&TV\left(P(X \mid T=1),P(X \mid T=0)\right) \\
&= TV\left(P(b(X) \mid T=1),P(b(X) \mid T=0)\right).
\end{align*}
\textit{Proof: }\hspace{3mm}See Appendix~\ref{app:balance_bX_X}.\hspace{1mm}$\square$
\end{theorem}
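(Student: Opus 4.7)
The plan is to prove the identity by showing both inequalities.

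For the easy direction ($\geq$), I would invoke the data processing inequality for total variation: since $b$ is a (measurable) function of $X$, the pushforward map is a contraction for $TV$, and therefore
\[
TV\bigl(P(X \mid T=1),\,P(X \mid T=0)\bigr) \;\geq\; TV\bigl(P(b(X) \mid T=1),\,P(b(X) \mid T=0)\bigr).
\]
This inequality holds without any assumption on $b$; only measurability is needed.

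For the reverse direction ($\leq$), the strategy is to use the variational characterization $TV(P,Q) = \sup_{A} \lvert P(A) - Q(A)\rvert$ together with the balancing score property. Fix any measurable set $A$ in the $X$-space. By the tower property,
\[
P(X \in A \mid T=t) \;=\; \mathbb{E}\bigl[\,\mathbb{E}[\mathbbm{1}_A(X) \mid b(X),\, T=t]\,\big|\, T=t\bigr].
\]
The balancing score condition $X \indep T \mid b(X)$ implies that the inner conditional expectation $\mathbb{E}[\mathbbm{1}_A(X) \mid b(X), T=t]$ does not depend on $t$; call this common value $g_A(b(X))$, which lies in $[0,1]$. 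Substituting back,
\[
P(X \in A \mid T=1) - P(X \in A \mid T=0) \;=\; \mathbb{E}[g_A(b(X)) \mid T=1] - \mathbb{E}[g_A(b(X)) \mid T=0].
\]
Since $g_A$ is a $[0,1]$-valued test function on the $b(X)$-space, the right-hand side is bounded in absolute value by $TV(P(b(X)\mid T=1), P(b(X)\mid T=0))$ (by the dual characterization of $TV$ via $[0,1]$-valued functions, or equivalently by $\sup_{B} \lvert P(b(X) \in B\mid T=1) - P(b(X)\in B\mid T=0)\rvert$ after a standard approximation). Taking the supremum over $A$ yields the desired $\leq$ direction.

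The two inequalities together give equality. The main subtlety I expect is the measure-theoretic justification that $\mathbb{E}[\mathbbm{1}_A(X) \mid b(X), T=t]$ is genuinely independent of $t$ as a function of $b(X)$ (this is exactly what the balancing score definition gives, but it must be written as an almost-sure equality of regular conditional expectations). Once this is in place, the rest is essentially the standard argument that a sufficient statistic preserves $TV$. An alternative route, if one prefers to manipulate densities directly, is to write $p(x \mid T=t) = p(x \mid b(X)=b(x))\,p(b(X)=b(x) \mid T=t)$, subtract, take absolute values, and integrate over level sets of $b$ via disintegration; this yields the same conclusion but with a heavier measure-theoretic overhead, which is why I would prefer the variational approach above.
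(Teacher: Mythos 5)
Your proposal is correct and follows essentially the same route as the paper's proof: the contraction/data-processing direction corresponds to the paper's observation that $f\circ b$ ranges over admissible test functions on the $\mathcal{X}$-space, and your tower-property argument with $g_A(b(X)) = \mathbb{E}[\mathbbm{1}_A(X)\mid b(X)]$ is exactly the paper's construction of $g(\B{\beta}) = \mathbb{E}[f(X)\mid b(X)=\B{\beta}]$, just phrased with indicator functions instead of general bounded test functions. The only cosmetic difference is the choice of dual characterization of $TV$ (sets versus $\|f\|_{L^\infty}\leq 1$), which does not affect the argument.
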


This allows us to potentially use lower-dimensional balancing scores $b(X)$ instead of high-dimensional covariates to achieve balance in $X$, as the following corollary shows that balance in $b(X)$ ensures balance in $X$ :

\begin{corollary} \label{th:balance_X}
Under the same conditions as Proposition~\ref{th:tv_eq_bX}, 
\begin{align*}
&P(b(X) \mid T=1) = P(b(X) \mid T=0) \\
&\implies P(X \mid T=1) = P(X \mid T=0).
\end{align*}
\textit{Proof: }\hspace{3mm}See Appendix~\ref{app:balance_bX_X}.\hspace{1mm}$\square$
\end{corollary}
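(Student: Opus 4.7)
The plan is to derive the corollary as an immediate consequence of Proposition~\ref{th:tv_eq_bX}, together with the fact that the total variation distance is a genuine metric on probability measures and therefore vanishes only on identical distributions.

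First I would translate the hypothesis into an imbalance statement: the assumption $P(b(X) \mid T=1) = P(b(X) \mid T=0)$ is exactly the statement that $TV\left(P(b(X) \mid T=1), P(b(X) \mid T=0)\right) = 0$. Then I would apply Proposition~\ref{th:tv_eq_bX}, which asserts equality of the two TV-imbalances, to conclude that $TV\left(P(X \mid T=1), P(X \mid T=0)\right) = 0$ as well. Finally, invoking the fact that TV is a probability distance (so that zero TV-distance implies equality of measures, as already noted in Section~\ref{sec:Key concepts}), I conclude $P(X \mid T=1) = P(X \mid T=0)$.

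As a sanity check and an alternative route that does not route through Proposition~\ref{th:tv_eq_bX}, I would also note that the statement can be obtained directly from the balancing score condition $X \indep T \mid b(X)$ by the law of total probability:
\begin{align*}
P(X \mid T=t) = \int P(X \mid b(X)=\B{\beta}, T=t)\, P(b(X)=\B{\beta} \mid T=t)\, d\B{\beta},
\end{align*}
where the first factor under the integral is independent of $t$ by the balancing score property, and the second factor is independent of $t$ by hypothesis, so the whole integral does not depend on $t$.

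There is essentially no difficult step here: the result is a corollary in the strong sense that it follows in one line from Proposition~\ref{th:tv_eq_bX}. The only minor point worth flagging is being explicit that TV is a proper probability metric (so that the implication from zero TV-imbalance to distributional equality is valid), which the paper has already emphasised and contrasts with non-metric discrepancies such as linear MMD.
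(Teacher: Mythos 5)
Your proposal is correct and follows exactly the same route as the paper's own proof: translate the hypothesis into zero $TV$-imbalance in $b(X)$, apply Proposition~\ref{th:tv_eq_bX} to transfer this to $X$, and conclude using the fact that total variation is a genuine probability metric. The alternative argument via the law of total probability is a valid bonus, but the main line of reasoning is identical to the paper's.
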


Matching on a given balancing score $b(X)$ is commonly used to reduce imbalance in $b(X)$, with the aim of consequently reducing imbalance in $X$. Proposition \ref{th:tv_eq_bX} shows that a lower $TV$-imbalance in $b(X)$ will also mean a lower $TV$-imbalance in $X$, but only if $b(X)$ remains a balancing score in the post-matching distribution $P'$. Thankfully, the following Proposition shows that $b(X)$ remains a balancing score after matching.

\begin{theorem} \label{th:phew}
Let $b$ be a function such that $b(X)$ is a balancing score, $P'$ be a distribution obtained from matching every treated unit with control units using $b(X)$ only. Then $b(X)$ is also a balancing score in $P'$.
\newline \par\vspace{-.3cm} \noindent \textit{Proof: }\hspace{3mm}See Appendix~\ref{app:balance_bX_X}.\hspace{1mm}$\square$
\end{theorem}
Thus, all further theoretical results involving balancing scores in the original distribution will also be valid in the matched distribution. An important question left open at this point is how to find such a function $b$ such that $b(X)$ is a balancing score.

Leveraging theoretical results in \citepMain{Rosenbaum1983ps}, balancing scores can be linked to the propensity score $e(X) = P(T=1 \mid X)$. 

\begin{theorem} \label{th:ps_balancing}
A function $b(X)$ is a balancing score, if and only if $b(X)$ can be mapped deterministically to the propensity score $e(X)$ through a function $f$, i.e. 
\begin{align*}
e(X) = f(b(X)).
\end{align*}
\textit{Proof: }\hspace{2mm}See~\citepMain[Thm. 2]{Rosenbaum1983ps}.\hspace{1mm}$\square$
\end{theorem}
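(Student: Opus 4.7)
The plan is to establish both directions of the equivalence by exploiting two simple facts: since $b$ is a deterministic function of $X$, conditioning on $X$ coincides with conditioning on $(X, b(X))$; and $e(X) = P(T=1\mid X)$ is by definition a $\sigma(X)$-measurable quantity. Together with the tower property of conditional expectations, these observations should yield a short proof.

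For the forward direction, I would assume $X \indep T \mid b(X)$ and show that $e(X)$ can be written as $f(b(X))$. Because $b(X)$ is a function of $X$, we have $P(T=1 \mid X) = P(T=1 \mid X, b(X))$. The balancing score assumption then gives $P(T=1 \mid X, b(X)) = P(T=1 \mid b(X))$. Setting $f(\beta) := P(T=1 \mid b(X) = \beta)$ immediately yields $e(X) = f(b(X))$, which is the required deterministic mapping.

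For the reverse direction, I would assume $e(X) = f(b(X))$ and prove the conditional independence $X \indep T \mid b(X)$. Since $T$ is binary it suffices to show $P(T=1 \mid X, b(X)) = P(T=1 \mid b(X))$. The left-hand side equals $e(X) = f(b(X))$ using that $b(X)$ is a function of $X$. For the right-hand side, by the tower property,
\begin{align*}
P(T=1 \mid b(X))
&= \mathbb{E}[\,P(T=1 \mid X) \mid b(X)\,] \\
&= \mathbb{E}[\,f(b(X)) \mid b(X)\,] \\
&= f(b(X)),
\end{align*}
so both sides agree and the conditional independence follows.

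I do not expect any real obstacle: the content is essentially a manipulation of conditional expectations. The only subtle point to be careful about is justifying that conditioning on $X$ is equivalent to conditioning on $(X, b(X))$ for a deterministic $b$, and making sure the tower property is applied with the correct $\sigma$-algebras. Because the paper cites \citepMain[Thm. 2]{Rosenbaum1983ps}, one could equivalently just quote that result, but the self-contained argument above is short enough to reproduce in full if desired.
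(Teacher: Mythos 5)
Your argument is correct: both directions are the standard proof of Rosenbaum and Rubin's Theorem 2, which is exactly what the paper relies on (it gives no proof of its own, only the citation). The forward direction via $P(T=1\mid X)=P(T=1\mid X,b(X))=P(T=1\mid b(X))$ and the reverse direction via the tower property, using that $T$ is binary so conditional independence reduces to equality of the two conditional probabilities of $\{T=1\}$, reproduce the cited result faithfully.
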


It follows from Proposition~\ref{th:ps_balancing} that $e(X)$ is itself a balancing score for the identity map. 
When this identity does not hold, $b(X)$ is said to be ``finer'' than $e(X)$, and conversely, $e(X)$ is ``coarser'' than $b(X)$. 
As noted in~\citepMain{Rosenbaum1983ps}, $X$ is the finest balancing score, containing the most information; $e(X)$ is the coarsest balancing score, containing the least information; and any other $b(X)$ such that $e(X) = f(b(X))$ lies between the two. 
Choosing the degree of coarseness via multi-dimensional balancing scores to achieve optimal matching results rather than assuming a one-dimensional balancing score (i.e. the propensity score) is what we exploit in our method which we introduce in the following.

\subsection{Introducing Neural Score Matching}

Previous work has largely focused on the use of the propensity score $e(X)$ as a balancing score, and relatively little attention has been paid to non-trivial balancing scores that are neither $X$ nor $e(X)$. 
Neural networks provide a natural mechanism by which to construct such balancing scores: 
fundamentally, a multi-layer neural network is a composition of functions $f_1, f_2, \dots, f_L$. 
Let us for a moment assume this network (perfectly) estimates the propensity score, i.e. $\hat{e}(X) = f_L \circ f_{L-1} \circ ... \circ f_1(X) = e(X)$.  
Then, by Proposition~\ref{th:ps_balancing}, this provides us with $L+1$ balancing scores ($X$, the $L-1$ intermediate hidden representations and the estimated propensity score) that are coarser and coarser with increasing ``depth'' of the composition. 
We note that instead of neural networks parameterising $f_1, f_2, \dots, f_L$, one may consider other hierarchical models.
We formalise this general principle, which we call \textit{neural score matching}, in the following Proposition:

\begin{theorem} \label{th:all_balancing}
Assume that $e(X) = f_L \circ f_{L-1} \circ \dots \circ f_1(X)$ for some functions $f_1, \dots, f_L$. Define $b_0(X) := X$ and $b_l(X) = f_l \circ f_{l-1} \circ \dots \circ f_1(X)$ for $l = 1,\dots,L$. Then, every $b_l(X)$ is a balancing score, and for any $l < L$, $b_{l+1}(X)$ is coarser than $b_l(X)$. 
\newline \par\vspace{-.3cm} \noindent  \textit{Proof: }\hspace{3mm}See Appendix~\ref{app:nsm}.\hspace{1mm}$\square$
\end{theorem}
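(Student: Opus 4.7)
The plan is to invoke Proposition \ref{th:ps_balancing} directly on each intermediate representation $b_l(X)$, exploiting associativity of function composition to exhibit the required deterministic map from $b_l(X)$ onto the propensity score. Both halves of the statement—being a balancing score, and being coarser than its predecessor—should reduce to one-line composition arguments; no probabilistic content beyond Proposition \ref{th:ps_balancing} is needed.

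For the balancing score half, for each $l \in \{1,\dots,L-1\}$ I would define the ``tail'' map $g_l := f_L \circ f_{L-1} \circ \dots \circ f_{l+1}$ and note that by associativity
\[
e(X) \;=\; f_L \circ \cdots \circ f_{l+1} \circ f_l \circ \cdots \circ f_1 (X) \;=\; g_l\!\bigl(b_l(X)\bigr),
\]
so Proposition \ref{th:ps_balancing} applies and $b_l(X)$ is a balancing score. The two endpoints I would handle separately: for $l=0$, $b_0(X)=X$ is trivially a balancing score since $X \indep T \mid X$ (and alternatively $e(X) = (f_L \circ \cdots \circ f_1)(b_0(X))$ lets Proposition \ref{th:ps_balancing} apply directly); for $l=L$, $b_L(X)=e(X)$ is the propensity score itself, corresponding to taking $g_L$ as the identity in Proposition \ref{th:ps_balancing}.

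For the coarseness half, I would simply observe that by construction, for any $l<L$,
\[
b_{l+1}(X) \;=\; f_{l+1}\!\bigl(b_l(X)\bigr),
\]
i.e.\ $b_{l+1}$ is a deterministic post-composition of $b_l$, which is exactly the notion of ``coarser'' introduced in the paper just after Proposition \ref{th:ps_balancing}. Because the propensity score $e(X) = g_l(b_l(X))$ factors through each $b_l$, the chain $b_0 \to b_1 \to \cdots \to b_L$ forms a monotone sequence of balancing scores from finest ($X$) to coarsest ($e(X)$), matching the interpretation described in the text.

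There is essentially no substantive obstacle here: the main thing to be careful about is notational, namely handling the boundary indices $l=0$ and $l=L$ cleanly (so that the ``tail'' composition $g_l$ is well-defined, with $g_L$ being the identity), and making explicit that the paper's definition of coarseness—stated between a generic balancing score and $e(X)$—is being used in its natural extension to any pair of functions of $X$. Once those conventions are fixed, both assertions of the proposition follow immediately from Proposition \ref{th:ps_balancing} and the compositional form of the neural network.
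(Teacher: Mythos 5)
Your proposal is correct and follows essentially the same route as the paper's proof: invoke Proposition~\ref{th:ps_balancing} via the tail composition $e(X) = f_L \circ \cdots \circ f_{l+1}(b_l(X))$ to establish the balancing-score property, and read off coarseness from $b_{l+1}(X) = f_{l+1}(b_l(X))$. Your explicit handling of the boundary indices $l=0$ and $l=L$ is slightly more careful than the paper's, which only spells out the case $l<L$, but the argument is the same.
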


Using this Proposition, we can now connect these balancing scores to our goal of achieving balance in $X$:

\begin{corollary} \label{cor:all_balancing_matching}
Under the same conditions and notation as Proposition~\ref{th:all_balancing}, for any $l, l' = 0, \ldots, L$,
\begin{align*}
&TV\left(P(b_{l}(X) \mid T=1),P(b_{l}(X) \mid T=0)\right) \\
&= TV\left(P(b_{l'}(X) \mid T=1),P(b_{l'}(X) \mid T=0)\right),
 \end{align*}
 and balance in $b_{l'}(X)$ is equivalent to balance in $b_l(X)$.
\newline \par\vspace{-.3cm} \noindent \textit{Proof: }\hspace{3mm}See Appendix~\ref{app:nsm}.\hspace{1mm}$\square$
\end{corollary}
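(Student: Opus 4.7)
The plan is to reduce the claim to two facts already established earlier in the excerpt: Proposition~\ref{th:all_balancing}, which says that every $b_l(X)$ in the chain is itself a balancing score, and Proposition~\ref{th:tv_eq_bX}, which says that the $TV$-imbalance of any balancing score equals the $TV$-imbalance of $X$ itself. With these in hand, the corollary should essentially read off by transitivity, with the balance equivalence following from the fact that $TV$ is a probability distance.

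First, I would invoke Proposition~\ref{th:all_balancing} to observe that each $b_l(X)$ for $l = 0, \ldots, L$ is a balancing score. Applying Proposition~\ref{th:tv_eq_bX} to the balancing score $b_l(X)$ then yields
\begin{align*}
&TV\left(P(b_l(X) \mid T=1), P(b_l(X) \mid T=0)\right) \\
&= TV\left(P(X \mid T=1), P(X \mid T=0)\right),
\end{align*}
and exactly the same identity holds with $l$ replaced by $l'$. The equality claim of the corollary then follows by transitivity, since both sides reduce to the common quantity $TV\left(P(X \mid T=1), P(X \mid T=0)\right)$.

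For the balance equivalence, I would appeal to the fact that $TV$ is a probability distance, so $TV(\mu, \nu) = 0$ if and only if $\mu = \nu$. Combined with the $TV$-equality just established, this immediately gives that $P(b_l(X) \mid T=1) = P(b_l(X) \mid T=0)$ if and only if $P(b_{l'}(X) \mid T=1) = P(b_{l'}(X) \mid T=0)$, which is precisely the asserted equivalence.

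The main obstacle is not located in this corollary at all: all of the genuine content has already been absorbed into Proposition~\ref{th:all_balancing} (which establishes the balancing-score property of every intermediate representation) and Proposition~\ref{th:tv_eq_bX} (which pins the $TV$-imbalance of any balancing score to that of $X$). Conditional on those two results, the corollary is essentially a two-line consequence, and the only thing to be careful about is that the argument for the balance equivalence requires $TV$ specifically, rather than a generic imbalance functional $D$ that may fail to separate distributions (as noted in Section~\ref{sec:Key concepts} of the paper).
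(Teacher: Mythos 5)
Your proof is correct and rests on the same two ingredients as the paper's (Proposition~\ref{th:all_balancing} and Proposition~\ref{th:tv_eq_bX}), but it routes the argument differently. The paper applies Proposition~\ref{th:tv_eq_bX} pairwise to \emph{consecutive} layers --- observing that $b_{l+1}(X)$ is a balancing score with respect to $b_l(X)$, so the $TV$-imbalances of $b_l(X)$ and $b_{l+1}(X)$ agree --- and then chains these equalities by induction over $l$. You instead apply Proposition~\ref{th:tv_eq_bX} to each $b_l(X)$ as a balancing score of $X$ directly, so every layer's $TV$-imbalance equals the common value $TV\left(P(X \mid T=1),P(X \mid T=0)\right)$, and the stated equality follows by transitivity. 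Your version is marginally more economical: it uses only the literal statements of the two propositions, whereas the paper's pairwise step requires reading Proposition~\ref{th:tv_eq_bX} in a relativized form, with $b_l(X)$ playing the role of $X$ and $b_{l+1}(X)$ the role of $b(X)$. The paper's chaining has the mild advantage of exhibiting the layer-to-layer conditional independence structure explicitly. Your handling of the balance equivalence --- invoking that $TV$ is a genuine probability distance, so zero $TV$ forces equality of distributions --- is exactly the paper's appeal to the argument of Corollary~\ref{th:balance_X}, and your closing caveat that this step needs $TV$ specifically rather than an arbitrary discrepancy $D$ is well placed.
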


This Proposition gives us a choice of balancing scores with varying degree of coarseness which we can use for matching. 
Note that achieving balance in \textit{any} of the scores will yield balance in \textit{all} of them, and particularly in $X = b_0(X)$. 
On the other hand, perfect balance is difficult to attain, but we can still aim to achieve the lowest imbalance possible. Importantly, although imbalance is identical for two given balancing scores in the same hierarchical propensity score model \textit{when the distribution is fixed}, matching on these two balancing scores will in general result in different distributions and consequently different imbalances.

Thus, if we can compute $TV$-imbalances, the Proposition ensures that selecting the balancing score and matching procedure with the lowest resulting $TV$-imbalance will also reach the lowest $TV$-imbalance in covariate distributions $X$ among the candidate balancing scores and matching procedures.

It is important to note that Proposition \ref{th:all_balancing}, Corollary \ref{cor:all_balancing_matching} and the following theoretical results all assume that $\hat{e}(X) = e(X)$, i.e. a well-calibrated propensity score model, or at least that the obtained scores are indeed balancing scores.
In Section \ref{sec:approx_bound}, we will relax this assumption and provide theoretical bounds when scores violate the balancing score assumption from Definition \ref{def:balancing_scores}.

In practice, however, the total variation distance is not suitable for this purpose of balancing score comparison due to the difficulties with estimating it in finite samples~\citepMain{Kallus2020deepmatch}. 
We provide results with alternative metrics which overcome this issue in Section \ref{sec:wass_and_mmd}

\subsection{Bounds With Estimable Integral Probability Metrics}
\label{sec:wass_and_mmd}

We start with a general inequality that shows that any imbalance in $X$ measured using an integral probability metric (IPM) is also upper-bounded by such an imbalance in $b(X)$.

\begin{theorem}  \label{th:ipm}
Let $\mathcal{F}$ be a set of functions of $X$. For any function $b$ of $X$, define
\begin{align*}
\mathcal{F}_b := \left\{ \ \B{\beta} \mapsto \mathbb{E}[f(X) \mid b(X) = \B{\beta}], \ \ f \in \mathcal{F} \right\}.
\end{align*}
Then, for any balancing score $b(X)$ and any set  $\mathcal{G}$ of functions on the image set of $b$ such that $\mathcal{F}_b \subseteq \mathcal{G}$,
\begin{align*}
    &\text{IPM}_{\mathcal{F}}\left(P(X \mid T=1), P(X \mid T=0)\right)\\
    &\leq \text{IPM}_{\mathcal{G}}\left(P(b(X) \mid T=1), P(b(X) \mid T=0)\right) 
\end{align*}
with equality when $\mathcal{G} = \mathcal{F}_b$.
\end{theorem}

As a result, any measure of imbalance of original covariates based on an IPM, including using popular ones such as the linear MMD or the Wasserstein distance, can be controlled using another measure of imbalance depending on an IPM. Thus, as in Corollary \ref{cor:all_balancing_matching}, we expect to reduce any IPM-imbalance in $X$ when reducing another IPM-imbalance in $b(X)$, further justifying matching on $b(X)$ as an alternative to matching on $X$ when the measure of interest for imbalance in $X$ is an IPM. Further, if we had access to the $\text{IPM}_{\mathcal{F}_b}$-imbalance in $b(X)$, we could again use it to select the appropriate balancing score, as for the total variation distance. One caveat is that it is precisely unclear \emph{which} IPM-imbalance in $b(X)$ is suitable in Proposition \ref{th:ipm} as the class $\mathcal{F}_b$ is non-trivial due to the conditional expectation in $b(X)$, even for common base classes $\mathcal{F}$ such as linear or Lipschitz functions. Thus, the question remains whether we can bound the IPM-imbalance of $X$ using a \emph{computable} IPM-imbalance.

To solve this, we consider a \textit{linear} balancing score $b(X)$, meaning that $b$ is a linear function.
For example, this can be realised by considering the first layer of a neural network before applying an activation function.
In this simple case, and under strong assumptions on the distribution of $X$, we can leverage popular integral probability metrics which \textit{can} be estimated with finite samples, in contrast to the total variation distance.

\begin{theorem}  \label{th:wass_and_mmd}
Let $b$ be a function such that $\forall \B{x}, \ b(\B{x}) = W\B{x}$ for some matrix $W$ and $b(X)$ is a balancing score.
Let $||.||$ be the Euclidean norm on any vector space, and $|||.|||$ be a norm\footnote{Examples include the operator norm or the Euclidean norm.} on any matrix space such that $\forall \B{x}, A, ||A\B{x}|| \leq |||A||| \cdot ||\B{x}||$. 
Further, let $A^+$ be the Moore-Penrose pseudo-inverse of $A$, $\wass$ be the Wasserstein distance, $\mmd$ be the linear MMD\footnote{Note that these theoretical results also hold when $b(X)$ has a bias term.}. Let $W^+_\Sigma := \Sigma W^T (W\Sigma W^T)^+$. If $X$ is elliptical with covariance matrix $\Sigma$ then
\begin{align*}
&\frac{1}{|||W|||} \cdot \mmd\left(P(b(X) \mid T=1),P(b(X) \mid T=0)\right) \\
&\leq \mmd\left(P(X \mid T=1),P(X \mid T=0)\right) \\
&\leq |||W^+_\Sigma|||\cdot \mmd\left(P(b(X) \mid T=1),P(b(X) \mid T=0)\right)
\end{align*}
If $X$ is Gaussian with positive-definite covariance matrix $\Sigma$ and $W$ has full row rank then
\begin{align*}
&\frac{1}{|||W|||} \cdot \wass\left(P(b(X) \mid T=1),P(b(X) \mid T=0)\right) \\
&\leq \wass\left(P(X \mid T=1),P(X \mid T=0)\right) \\
&\leq |||W^+_\Sigma||| \cdot \wass\left(P(b(X) \mid T=1),P(b(X) \mid T=0)\right).
\end{align*}
\end{theorem}

This Proposition provides lower- and upper-bounds (in contrast to Proposition \ref{th:tv_eq_bX}) for the Wasserstein- or linear MMD-imbalance in $X$ which depend linearly on the corresponding imbalance in $b(X)$. 

One could exploit these bounds by computing them for different balancing scores and choose the one with the lowest (lower or upper) bounds of the interval, or the narrowest bounds. 
One might also perform a type of optimal matching minimising the Wasserstein or linear MMD imbalance in $b(X)$. 
However, it is important to point out that these bounds may be wide depending on the singular values of $W$. For example, assume $\Sigma = I$, then $W^+_\Sigma = W^+$. 
When using the operator norm and denoting $\sigma_\text{min}(W)$ and $\sigma_\text{max}(W)$ as the minimal and maximal non-zero singular values of $W$  \footnote{This assumes $W \neq 0$, i.e. we do not have balance in $X$.}, respectively, we have $\frac{1}{|||W|||} = \frac{1}{\sigma_\text{max}(W)}$ and $|||W^+||| = \frac{1}{\sigma_\text{min}(W)}$. As a consequence, values within the bounds can vary by a factor of $\frac{ \sigma_\text{max}(W)}{ \sigma_\text{min}(W)}$. Further, the strong assumptions on the distribution of $X$ might not hold in practice, especially in the post-matching distribution.

In addition, the imbalance in $b(X)$ might also help speed up computations.  In Appendix \ref{app:complexity_wass}, we show how the computational complexity of the estimators of the Wasserstein distance can be reduced on a lower-dimensional space.

From the insights of Proposition \ref{th:wass_and_mmd}, we only use the first layer of a neural network for the purpose of matching; the other layers serve to achieve a better fit of the propensity score model.

\subsection{Bounds For Non-Balancing Scores}
\label{sec:approx_bound}

As mentioned above, a requirement for applying the above Propositions within the context of hidden representations of a neural network is that either the estimated propensity score of said network equals the true propensity score, or more generally, every learned function $b$ is indeed a balancing score. 
When this is not the case, as the next Proposition shows, we can still bound the imbalance in $X$ in terms of the imbalance in $b(X)$ and some quantification of ``how much'' the assumption $X \indep T | b(X)$ is violated.

\begin{theorem} \label{th:approx}
Let
\begin{align*}
\mathcal{E}^D_{t,b}(\B{\beta}) := D\Big( P\big(X | b(X) = \B{\beta} , T=t\big), P\big(X |  b(X) = \B{\beta} \big) \Big)
\end{align*}
where $D$ is a probability discrepancy measure, $b$ is a function of $X$, $t \in \{0,1\}$ is a realisation of $T$, $\B{\beta}$ is a realisation of $b(X)$. For any function $b$,
    \begin{align*}
&TV\Big( P\big(b(X) | T=1\big), P\big(b(X) | T=0\big) \Big) \\
&\leq TV\Big( P\big(X | T=1\big), P\big(X | T=0\big) \Big) \\
& \leq TV\Big( P\big(b(X) | T=1\big), P\big(b(X) | T=0\big) \Big) \\
& + \mathbb{E}\big[\mathcal{E}^{TV}_{1,b}\big(b(X)\big) | T=1\big] + \mathbb{E}\big[\mathcal{E}^{TV}_{0,b}\big(b(X)\big) | T=0\big]
\end{align*}
and, using the notations of Proposition \ref{th:ipm},
\begin{align*}
    &\text{IPM}_{\mathcal{F}}\left(P(X \mid T=1), P(X \mid T=0)\right)\\
    &\leq \text{IPM}_{\mathcal{F}_b}\left(P(b(X) \mid T=1), P(b(X) \mid T=0)\right) \\
    &+ \mathbb{E}\big[\mathcal{E}^{\text{IPM}_{\mathcal{F}}}_{1,b}\big(b(X)\big) | T=1\big] + \mathbb{E}\big[\mathcal{E}^{\text{IPM}_{\mathcal{F}}}_{0,b}\big(b(X)\big) | T=0\big].
\end{align*}
For a linear function  $b(x) = Wx$, if $X$ is elliptical with covariance matrix $\Sigma$, then
\begin{align*}
&\frac{1}{|||W|||} \cdot
 \mmd\Big( P\big(b(X) | T=1\big), P\big(b(X) | T=0\big) \Big) \\
&\leq \mmd\Big( P\big(X | T=1\big), P\big(X | T=0\big) \Big) \\
& \leq |||W^+_\Sigma||| \cdot
 \mmd\Big( P\big(b(X) | T=1\big), P\big(b(X) | T=0\big) \Big) \\
 &
 + \mathbb{E}\big[\mathcal{E}^{\mmd}_{1,b}\big(b(X)\big) | T=1\big] + \mathbb{E}\big[\mathcal{E}^{\mmd}_{0,b}\big(b(X)\big) | T=0\big]
\end{align*}
and if $X$ is Gaussian with positive-definite covariance matrix $\Sigma$ while $W$ has full row rank, then
\begin{align*}
&\frac{1}{|||W|||} \cdot
 \wass\Big( P\big(b(X) | T=1\big), P\big(b(X) | T=0\big) \Big) \\
& \leq \wass\Big( P\big(X | T=1\big), P\big(X | T=0\big) \Big) \\
& \leq |||W^+_\Sigma||| \cdot
 \wass\Big( P\big(b(X) | T=1\big), P\big(b(X) | T=0\big) \Big) \\
 &
 + \mathbb{E}\big[\mathcal{E}^{\wass}_{1,b}\big(b(X)\big) | T=1\big] + \mathbb{E}\big[\mathcal{E}^{\wass}_{0,b}\big(b(X)\big) | T=0\big].
\end{align*}
\end{theorem}

Unlike the calibration error, i.e. the mean difference between true and predicted propensity scores, the extra balancing error term in the Proposition does not rely on access to the true propensity score. Therefore, it could be computed and used to obtain an upper bound of covariate imbalance in any dataset. In practice, however, it might be challenging to estimate as it relies on conditional expectations for which few samples may be available.

\section{EXPERIMENTS}
\label{sec:Experiments}

We now evaluate neural score matching on two semi-synthetic datasets and benchmark it against other matching methods. 
We provide code to implement neural score matching and reproduce the main results at \textcolor{blue}{\href{https://github.com/oscarclivio/neuralscorematching}{\url{https://github.com/oscarclivio/neuralscorematching}}}.

\subsection{Experimental Setup}

Our general procedure for matching and in particular neural score matching follows two stages: in the first stage, we learn a model to obtain some representation or score $s$ from datapoints. 
In the second stage, we perform matching on these scores using the Euclidean distance\footnote{We use the Euclidean distance as the Mahalanobis distance was prohibitively slow to compute in high-dimensional and large sample settings.} $d(s_i,s_j) = ||s_i - s_j||_2$.
We use nearest neighbour matching with replacement using one neighbour. 

To perform neural score matching, we train a neural network predicting treatment assignment from covariates, with the final one-dimensional layer being an estimator of the propensity score. Training is performed using a standard binary cross-entropy loss.
The neural network has the following architecture: 
one low-dimensional layer with 5 hidden units, two layers with 100 units and one final 1-dimensional layer. 
We use the leaky ReLU activation function in all layers except the last one where we use the sigmoid function. 
When using the hidden representation in the first layer before applying the activation function as a %
score, we refer to the resulting method as \texttt{NN Layer 1}. Notably, if it is indeed a balancing score, it meets the assumptions of Proposition \ref{th:wass_and_mmd}. 
From the insights of this Proposition, we choose to focus on one single multivariate layer for matching, and dedicate other layers to model fitting (with corresponding high dimensions as given above).
The final activation of the network estimates the propensity score and is also used for matching as a balancing score. 
We refer to it as \texttt{NN PS}. 

We benchmark these scores obtained by the neural network against other scores, namely $X$ (\texttt{X}) and a five-dimensional PCA reduction of $X$ (\texttt{PCA}).
We also benchmark against a logistic regression estimating the propensity score given $X$ or PCA features, which we refer to as \texttt{LogReg PS} and \texttt{PCA + LogReg PS}, respectively. 
In addition, we consider matching uniformly at random (\texttt{Random matching}) and leaving the treatment and control datasets unchanged w.r.t. balance by not matching at all (\texttt{No Matching}). 
All methods were evaluated using 10 different training random seeds.

We use variants of two standard datasets for treatment effect estimation: \textit{ACIC 2016} and \textit{News}. Both datasets have a large number of covariates (82 and 3477, respectively), rendering them challenging for standard matching techniques. 
They are both semi-synthetic: the covariates come from real-world studies, while the treatments and outcomes were generated through a data generating process. 
For every dataset, we will average results over different draws of the data generating process (100 for ACIC 2016 and 50 for News).
Early stopping was used on News. 
Results on a third dataset, IHDP, are presented in Appendix \ref{app:ihdp}.

To evaluate the methods, we report three metrics: 
\textit{calibration error}, defined as the mean absolute difference between the estimated and true propensity score, \textit{ATT error}, defined as the absolute difference between the ATT estimated by the method and a ground-truth ATT, and \textit{sample imbalance} $\hat{I}$, defined as the squared Euclidean distance between sample means of covariates of treated and control groups from the dataset $\mathcal{D}'$ obtained from the original dataset $\mathcal{D}$ after matching. 
To reliably assess the performance of the methods under investigation, we average and present standard deviations over the repeated draws of the data generating processes and additionally over the different random seeds for model fitting/training. 

We refer to Appendix~\ref{app:Implementation Details} for further details about implementation and experimental setup.

\subsection{Experimental Results}

In this section, we present our experimental results as Tables (and refer to Appendix~\ref{app:boxplots} for their visualisation as boxplots).

\subsubsection{ACIC 2016}

Results for the different matching methods under consideration are presented in Table \ref{tab:acic2016}. 
Propensity score models for the two dimensionality reduction methods (\texttt{NN Layer 1} and \texttt{PCA}) have better calibration than the standard logistic-regression propensity score (\texttt{LogReg PS}), with a slight advantage for \texttt{NN PS}. 
The relevance of using a multivariate score is demonstrated: on ATT errors and imbalances, \texttt{NN Layer 1} most often outperforms \texttt{NN PS}, and all other methods except:
\begin{itemize}
    \item Logistic regression propensity score (\texttt{LogReg PS}) on in-sample metrics. It is possible that the dimensionality remains sufficiently low for this method to handle (unlike News, see next section). However, the method might also overfit, as shown by the hold-out performance.
    \item \texttt{No Matching} and \texttt{PCA} on hold-out imbalances. Neural scores might need better generalisation as they increase imbalance compared to the original dataset, unlike \texttt{PCA}. 
    Other methods also increase imbalance, as expected.
\end{itemize}

\begin{table}[ht]
\caption{\label{tab:acic2016} Results on the ACIC2016 dataset.}
\begin{center}
\setlength{\tabcolsep}{2pt}
\begin{tabular}{lcc}
\toprule
 Calibration errors & In-Sample & Hold-Out \\
\midrule
\texttt{NN PS} (ours) & 0.055$\pm$0.000 & 0.055$\pm$0.000 \\
\texttt{LogReg PS} & 0.067$\pm$0.000 & 0.069$\pm$0.000 \\
\texttt{PCA + LogReg PS} & 0.058$\pm$0.001 & 0.058$\pm$0.001 \\
\midrule
ATT errors & In-Sample & Hold-Out \\
\midrule
\texttt{NN Layer 1} (ours) & 0.707$\pm$0.012 & 0.918$\pm$0.018 \\
\texttt{NN PS} (ours) & 0.735$\pm$0.012 & 1.008$\pm$0.019 \\
\texttt{X} & 0.848$\pm$0.018 & 0.990$\pm$0.019 \\
\texttt{Random matching} & 1.209$\pm$0.019 & 1.301$\pm$0.023 \\
\texttt{LogReg PS} & 0.678$\pm$0.012 & 1.036$\pm$0.018 \\
\texttt{PCA} & 0.927$\pm$0.016 & 1.007$\pm$0.020 \\
\texttt{PCA + LogReg PS} & 0.962$\pm$0.016 & 1.097$\pm$0.021 \\
\midrule
Sample imbalance & In-Sample & Hold-Out \\
\midrule
\texttt{NN Layer 1} (ours) & 0.107$\pm$0.001 & 0.422$\pm$0.003 \\
\texttt{NN PS} (ours) & 0.105$\pm$0.001 & 0.498$\pm$0.004 \\
\texttt{X} & 0.438$\pm$0.002 & 0.739$\pm$0.004 \\
\texttt{Random matching} & 0.232$\pm$0.003 & 0.558$\pm$0.006 \\
\texttt{LogReg PS} & 0.056$\pm$0.001 & 0.511$\pm$0.004 \\
\texttt{PCA} & 0.117$\pm$0.001 & 0.342$\pm$0.003 \\
\texttt{PCA + LogReg PS} & 0.134$\pm$0.001 & 0.488$\pm$0.004 \\
\texttt{No Matching} & 0.192$\pm$0.003 & 0.396$\pm$0.006 \\
\bottomrule
\end{tabular}
\end{center}
\end{table}

\subsubsection{News}

Results for the News dataset are presented in Table \ref{news}. 
Multivariate dimensionality-reduced scores (\texttt{NN Layer 1} and \texttt{PCA}) generally outperform their respective propensity scores (except \texttt{NN Layer 1} and \texttt{NN PS} having similar performance on ATT errors), as well as \texttt{Random matching}, \texttt{X} and \texttt{LogReg PS}. 
The two latter have particularly high ATT errors and imbalances, even compared to \texttt{Random matching}. 
This shows that multivariate, but lower-dimensional scores can improve matching on high-dimensional datasets. 
The performance is more balanced between \texttt{PCA} and \texttt{NN Layer 1}: \texttt{PCA} is better on imbalances, \texttt{NN Layer 1} on in-sample ATT errors, and their hold-out ATT errors are not significantly different according to standard errors.

\begin{table}[ht]
\centering
\caption{\label{news} Results on the News dataset.}
\setlength{\tabcolsep}{2pt}
\begin{tabular}{lcc}
\toprule
ATT errors & In-Sample & Hold-Out \\
\midrule
\texttt{NN Layer 1} (ours) & 0.071$\pm$0.002 & 0.106$\pm$0.004 \\
\texttt{NN PS} (ours) & 0.073$\pm$0.002 & 0.105$\pm$0.004 \\
\texttt{X} & 0.510$\pm$0.015 & 0.765$\pm$0.024 \\
\texttt{Random matching} & 0.100$\pm$0.003 & 0.114$\pm$0.004 \\
\texttt{LogReg PS} & 1.460$\pm$0.052 & 0.505$\pm$0.020 \\
\texttt{PCA} & 0.080$\pm$0.002 & 0.103$\pm$0.003 \\
\texttt{PCA + LogReg PS} & 0.095$\pm$0.003 & 0.100$\pm$0.003 \\
\midrule
Sample imbalance & In-Sample & Hold-Out \\
\midrule 
\texttt{NN Layer 1} (ours) & 1.518$\pm$0.022 & 3.886$\pm$0.045 \\
\texttt{NN PS} (ours) & 2.104$\pm$0.035 & 5.105$\pm$0.079 \\
\texttt{X} & 12.531$\pm$0.032 & 18.178$\pm$0.052 \\
\texttt{Random matching} & 2.121$\pm$0.041 & 4.581$\pm$0.043 \\
\texttt{LogReg PS} & 371.070$\pm$36.672 & 131.192$\pm$4.682 \\
\texttt{PCA} & 1.097$\pm$0.013 & 3.608$\pm$0.030 \\
\texttt{PCA + LogReg PS} & 1.444$\pm$0.017 & 4.600$\pm$0.046 \\
\texttt{No Matching} & 1.844$\pm$0.040 & 3.432$\pm$0.038 \\
\bottomrule
\end{tabular}
\end{table}

\section{DISCUSSION AND CONCLUSION}
\label{sec:Conclusion}

In this work, we have provided novel theoretical results motivating neural score matching: using neural networks to obtain balancing scores which can be readily used for matching. 
In contrast to lower-dimensional representations obtained from classical dimensionality reduction techniques (e.g. PCA), our method estimates lower-dimensional balancing scores as defined in Proposition \ref{th:ps_balancing}, which can be mapped back to the propensity score ``for free'' due to the inherent compositionality of neural networks, allowing more flexibility in choosing the degree of coarseness. 
This applies only if the model is correctly specified, however. Proposition \ref{th:approx} paves the way to rigorous analysis of situations when the constraint is violated. %
We found that in popular semi-synthetic datasets, neural score matching is competitive against other matching methods. 
In addition, our results indicate the general utility of dimensionality reduction techniques for matching in causal inference.
This leads the way towards learning suitable representations for matching which might be useful for downstream tasks to gain scientific insight, notably in areas where the use of neural networks is ubiquitous, such as medical imaging \citepMain{Zhou2020imaging}, text classification \citepMain{Minaee2021text} and audio processing \citepMain{Purwins2019audio}.

Our work has the following two limitations: %
1) It is difficult to properly specify and train neural networks for the task of matching. 
In particular, there is a trade-off between finding low-dimensional balancing scores, which implies low-dimensional hidden layers, and fitting the propensity score model, which implies wide hidden layers not suitable for matching. 
We also did not find hyperparameters that performed consistently better than others across all datasets, nor a correlation between matching performance and hold-out loss. 
More complex architectures than our experimental setup and a deeper understanding of the hyperparameter space should be explored. 
2) Most of our theoretical results assume the propensity score model is correct, or, more generally speaking, that the hidden layers are indeed balancing scores. 
Most often, neither is true.
Proposition \ref{th:approx} is a first step towards theoretical guarantees for scores that are not perfectly balancing.

Future work will investigate the following ideas: 
1) As outlined earlier, our propensity score model might be miscalibrated and the balancing scores might not perfectly balance covariates.
Empirically measuring calibration error and the violation of the balancing score property via Proposition \ref{th:approx}, we aim at using this to inform model training and hence improve performance.
2) We plan to extend the relatively simple setup of neural score matching as presented here to, for instance, using multiple intermediate balancing scores. 
This entails further questions, such as how to choose the degree of coarseness of the balancing scores, which might be assessed via empirical out-of-sample evaluation, and where to best place layers with few hidden units that are suited for matching.
3) We aim to develop a form of optimal matching which uses more general bounds of $\wass$- or $\mmd$-imbalance in $b(X)$ than those of Proposition \ref{th:wass_and_mmd}, and use them directly in a loss function, which in turn should reduce imbalance in $X$.
4) Our obtained balancing scores might enable the use of coarsened exact matching (CEM), offering the possibility to pre-specify the desired level of imbalance before matching~\citepMain{Iacus2012CEM}.
5) We aim to explore more in depth how intermediate balancing scores compare to propensity scores, e.g. by visualising how their spaces capture features of the covariate space. 
We also expect these intermediate balancing scores to be preferable to propensity scores for CATE estimation as they provide less coarse representations of covariates.

\newpage

\subsubsection*{Acknowledgements}
O.C. is supported by the EPSRC Centre for Doctoral Training in Modern Statistics and Statistical Machine Learning (EP/S023151/1) and Novo Nordisk. 
F.F. acknowledges the receipt of a studentship award from the Health Data Research UK-The Alan Turing Institute Wellcome PhD Programme in Health Data Science (Grant Ref: 218529/Z/19/Z).
B.L. was supported by the UK Engineering and Physical Sciences Research Council through the Bayes4Health programme (grant number EP/R018561/1) and gratefully acknowledges funding from Jesus College, Oxford. 
C.H. acknowledges support from the Medical Research Council Programme Leaders award MC\_UP\_A390\_1107, The Alan Turing Institute, Health Data Research, U.K., and the U.K. Engineering and Physical Sciences Research Council through the Bayes4Health programme grant.

We would like to thank the anonymous reviewers for helpful feedback.

\bibliographystyleMain{apalike}
\bibliographyMain{references.bib}

\renewcommand{\theequation}{S\arabic{equation}}
\renewcommand{\thefigure}{S\arabic{figure}}
\renewcommand{\bibnumfmt}[1]{[S#1]}
\renewcommand{\citenumfont}[1]{S#1}

\onecolumn
\aistatstitle{Neural Score Matching for High-Dimensional Causal Inference: Appendices}
\appendix
\thispagestyle{empty}

\section{PROOFS OF THEORETICAL RESULTS}

\subsection{Balance on $b(X)$ and $X$}
\label{app:balance_bX_X}

\begin{reptheorem}{th:tv_eq_bX}
Let $b$ be a function such that $b(X)$ is a balancing score. 
Then,
\begin{align*}
&TV\left(P(X \mid T=1),P(X \mid T=0)\right) \\
&= TV\left(P(b(X) \mid T=1),P(b(X) \mid T=0)\right).
\end{align*}
\end{reptheorem}

\textit{Proof:}
\begin{itemize}
    \item First, let us note that for any random variable $V$, and by definition of the total variation distance:
    \begin{align*}
        TV(P(V|T=1), P(V|T=0)) &= \sup_{||f||_{L^\infty} \leq 1} \big|\mathbb{E}[f(V)|T=1] - \mathbb{E}[f(V)|T=0] \big| \\
        &= \text{IPM}_{\{f: \ ||f||_{L^\infty} \leq 1 \}}( P(V|T=1), P(V|T=0) ) ,
    \end{align*}
    where IPM is defined in Equation \ref{eq:ipm}, $||\cdot||_{L^\infty}$ is the uniform norm, and f is a function.
    \item For any function $f$ on the $\mathcal{B}$ space (i.e. the image space of $b(X)$) such that $||f||_{L^\infty} < 1$:
        \begin{align*}
        \big|\mathbb{E}[f(b(X))|T=1] - \mathbb{E}[f(b(X))|T=0] \big|
        &= \big|\mathbb{E}[(f \circ b)(X)|T=1] - \mathbb{E}[(f \circ b)(X)|T=0] \big| \\
        & \leq TV(P(X|T=1), P(X|T=0)) 
    \end{align*}
    as $(f \circ b)$ is a function on the $\mathcal{X}$ space (i.e. the image space of $X$) with $||f||_{L^\infty} < 1$. 
        Thus, taking the supremum over all such functions $f$,
    \begin{align*}
    TV(P(b(X)|T=1),P(b(X)|T=0)) \leq TV(P(X|T=1),P(X|T=0)).
    \end{align*}

    \item
    Let $g(\B{\beta}) := \mathbb{E}[f(X)|b(X) = \B{\beta}]$. We show that $g$ is a function on the $\mathcal{B}$ space with $||g||_{L^\infty} < 1$: 
    \begin{align*}
        \forall \B{\beta}, |g(\B{\beta})|
        &= \Big|\mathbb{E}[f(X) \mid b(X) = \B{\beta}]\Big| \\
        &\leq \mathbb{E}[|f(X)| \ \mid \ b(X) = \B{\beta}] \text{ from Jensen's inequality} \\
        &\leq \mathbb{E}[1 \mid b(X) = \B{\beta}] \text{ as } ||f||_{L^\infty} < 1 \\ 
        &= 1.
    \end{align*}
    Therefore, as a consequence of Proposition \ref{th:ipm},
    \begin{align*}
    TV(P(X|T=1),P(X|T=0)) \leq TV(P(b(X)|T=1),P(b(X)|T=0)). 
    \end{align*}
    
    \item Consequently, it follows that 
    \begin{align*}
        TV(P(X|T=1),P(X|T=0)) = TV(P(b(X)|T=1),P(b(X)|T=0)).
    \end{align*}
\end{itemize} $\square$

\begin{repcorollary}{th:balance_X}
Under the same conditions as Proposition~\ref{th:tv_eq_bX}, 
\begin{align*}
&P(b(X) \mid T=1) = P(b(X) \mid T=0) \\
&\implies P(X \mid T=1) = P(X \mid T=0).
\end{align*}
\end{repcorollary}

\textit{Proof:} One should note that $P(b(X) \mid T=1) = P(b(X) \mid T=0)$ implies $TV(P(b(X)|T=1),P(b(X)|T=0)) = 0$. As a consequence, $TV(P(X|T=1),P(X|T=0)) = 0$ from Proposition~\ref{th:tv_eq_bX}. As the total variation is a distance, we obtain that $P(X|T=1) = P(X|T=0)$. $\hspace{1mm} \square$ 

\begin{reptheorem}{th:phew}
Let $b$ be a function such that $b(X)$ is a balancing score, $P'$ be a distribution obtained from matching every treated unit with control units using $b(X)$ only. Then $b(X)$ is also a balancing score in $P'$.
\end{reptheorem}
\textit{Proof:} Let $\B{\beta}$ be a value of $b(X)$. Any matching method using $b(X)$ only to match units does not change the conditional distribution of $X$ given $b(X) = \B{\beta}$ in the control group \citepAppendix{Rosenbaum1983ps}. Then,
\begin{equation} \label{b_control}
    P'(X | b(X) = \B{\beta}, T = 0) = P(X | b(X) = \B{\beta}, T = 0).
\end{equation}
The conditional distribution of $X$ given $b(X) = \B{\beta}$ in the treated group is likewise left unchanged as the matching method does not change the treated distribution in any way. Thus,
\begin{equation} \label{b_treated}
    P'(X | b(X) = \B{\beta}, T = 1) = P(X | b(X) = \B{\beta}, T = 1).
\end{equation} 
Also, as $b(X)$ is a balancing score in $P$,  $P(X | b(X) = \beta, T = 0) =  P(X | b(X) = \beta, T = 1)$. Tying it all together, we have
\begin{align*}
    P'(X | b(X) = \beta, T = 0)
    &= P(X | b(X) = \beta, T = 0) \ \ \ \text{from Equation \ref{b_control}} \\
    &= P(X | b(X) = \beta, T = 1) \ \ \ \text{as }b(X)\text{ is a balancing score.} \\
    &= P'(X | b(X) = \beta, T = 1) \ \ \ \text{from Equation \ref{b_treated}}  \\
\end{align*}
so $b(X)$ is a balancing score in $P'$. $\square$

\subsection{Further Balancing Scores}
\label{app:nsm}

\begin{reptheorem}{th:all_balancing}
Assume that $e(X) = f_L \circ f_{L-1} \circ \dots \circ f_1(X)$ for some functions $f_1, \dots, f_L$. Define $b_0(X) := X$ and $b_l(X) = f_l \circ f_{l-1} \circ \dots \circ f_1(X)$ for $l = 1,\dots,L$. Then, every $b_l(X)$ is a balancing score, and for any $l < L$, $b_{l+1}(X)$ is coarser than $b_l(X)$. 
\end{reptheorem}
\textit{Proof:} According to Proposition \ref{th:ps_balancing}, $b_l(X)$ where $l < L$ is a balancing score as
\begin{align*}
e(X) = f_L \circ f_{L-1} \circ ... \circ f_{l+1}(b_l(X)),
\end{align*}
and $e(X)$ is the propensity score with the property $X \indep T | e(X)$. 
Also for any $l < L$, $b_{l+1}(X)$ is coarser than $b_l(X)$ as $b_{l+1}(X) = f_{l+1}(b_l(X))$.

\begin{repcorollary}{cor:all_balancing_matching}
Under the same conditions and notation as Proposition~\ref{th:all_balancing}, for any $l, l' = 0, \ldots, L$,
\begin{align*}
&TV\left(P(b_{l}(X) \mid T=1),P(b_{l}(X) \mid T=0)\right) \\
&= TV\left(P(b_{l'}(X) \mid T=1),P(b_{l'}(X) \mid T=0)\right),
 \end{align*}
 and balance in $b_{l'}(X)$ is equivalent to balance in $b_l(X)$.
\end{repcorollary}
\textit{Proof:} First, for any $l < L$, as $b_{l+1}(X)$ is a balancing score w.r.t. $b_{l}(X)$ from Proposition \ref{th:all_balancing}, we note that Proposition \ref{th:tv_eq_bX} can also be applied to $b_l(X)$ and $b_{l+1}(X)$ instead of $X$ and $b(X)$, respectively. Thus, it follows from Proposition~\ref{th:tv_eq_bX} that
\begin{align*}
     TV(P(b_{l+1}(X)|T=1),P(b_{l+1}(X)|T=0)) = TV(P(b_{l}(X)|T=1),P(b_{l}(X)|T=0)).
\end{align*}
Consequently, it follows by induction that for any $l, l' = 0, \ldots, L$,
    \begin{align*}
        TV(P(b_l(X)|T=1),P(b_l(X)|T=0)) = TV(P(b_{l'}(X)|T=1),P(b_{l'}(X)|T=0)).
    \end{align*}
    Then, the proof that balance in $b_{l'}(X)$ is equivalent to balance in $b_l(X)$ is analogous to Corollary \ref{th:balance_X}. $\square$

\subsection{Other Integral Probability Metrics}
\label{app:wass_and_mmd}

\begin{reptheorem}{th:ipm}
Let $\mathcal{F}$ be a set of functions of $X$. For any function $b$ of $X$, define
\begin{align*}
\mathcal{F}_b := \left\{ \ \B{\beta} \mapsto \mathbb{E}[f(X) \mid b(X) = \B{\beta}], \ \ f \in \mathcal{F} \right\}.
\end{align*}
Then, for any balancing score $b(X)$ and any set  $\mathcal{G}$ of functions on the image set of $b$ such that $\mathcal{F}_b \subseteq \mathcal{G}$,
\begin{align*}
    &\text{IPM}_{\mathcal{F}}\left(P(X \mid T=1), P(X \mid T=0)\right)\\
    &\leq \text{IPM}_{\mathcal{G}}\left(P(b(X) \mid T=1), P(b(X) \mid T=0)\right) 
\end{align*}
with equality when $\mathcal{G} = \mathcal{F}_b$.
\end{reptheorem}

\begin{proof}
 As $b(X)$ is a balancing score, we have $T \perp \!\!\! \perp X | b(X)$ and for any measurable function $f$:
    \begin{align} 
        \mathbb{E}[f(X) \mid b(X),T] = \mathbb{E}[f(X) \mid b(X)] \label{eq:indep}
    \end{align}
Also, by definition, for any random variable $V$,
    \begin{align}
        \text{IPM}_{\mathcal{F}}(P(V|T=1), P(V|T=0)) = \sup_{f \in \mathcal{F}} \big|\mathbb{E}[f(V) \mid T=1] - \mathbb{E}[f(V) \mid T=0] \big|. \label{eq:ipm}
\end{align}
Let $f$ be a measurable function, then 
    \begin{align}
        \mathbb{E}[f(X) \mid T=t]
        &= \mathbb{E}\big[\mathbb{E}[f(X) \mid b(X),T=t] \ \mid \ T=t\big] \text{ due to the law of total expectation} \nonumber \\
        &= \mathbb{E}\big[\mathbb{E}[f(X) \mid b(X)] \ \mid \ T=t\big] \label{eq:tower_prop_balancing_score} \text{ due to Equation \eqref{eq:indep}} \\
        &= \mathbb{E}\big[g(b(X)) \ | \ T=t\big] \nonumber,
    \end{align}
    where $g(\B{\beta}) = \mathbb{E}[f(X) \mid b(X) = \B{\beta}]$. By definition, if $f \in \mathcal{F}$ then $g \in \mathcal{F}_b$ and, by definition of $\mathcal{G}$, $g \in \mathcal{G}$. Thus, for any $f \in \mathcal{F}$,
        \begin{align*}
        \big|\mathbb{E}[f(X) \mid T=1] - \mathbb{E}[f(X) \mid T=0] \big| &=  \big|\mathbb{E}[g(b(X)) \mid T=1] - \mathbb{E}[g(b(X)) \mid T=0] \big| \text{ for some } g \in \mathcal{G} \\
        &\leq \text{IPM}_{\mathcal{G}}(P(b(X) \mid T=1),P(b(X) \mid T=0)).
    \end{align*}
Taking the supremum wrt $\mathcal{F}$ on the LHS gives that
        \begin{align*}
        \text{IPM}_{\mathcal{F}}(P(X|T=1),P(X|T=0))
        &\leq \text{IPM}_{\mathcal{G}}(P(b(X) \mid T=1),P(b(X) \mid T=0)).
    \end{align*}
Further, let $g \in \mathcal{F}_b$. By definition, there exists $f \in \mathcal{F}$, such that $g(\B{\beta}) = \mathbb{E}[f(X) \mid b(X) = \B{\beta}]$. By Equation \ref{eq:tower_prop_balancing_score},
\begin{align*}
    \forall t = 0, 1, \ \ \mathbb{E}[g(b(X)) \mid T=t] = \mathbb{E}[f(X) \mid T=t].
\end{align*}
Thus,
\begin{align*}
    \left|\mathbb{E}[g(b(X)) \mid T=1] - \mathbb{E}[g(b(X)) \mid T=0] \right| &= \left|\mathbb{E}[f(X) \mid T=1] - \mathbb{E}[f(X) \mid T=0] \right| \text{ for some }f \in \mathcal{F} \\
    &\leq \text{IPM}_{\mathcal{F}}(P(X|T=1), P(X|T=0)).
\end{align*}
Taking the supremum wrt $\mathcal{F}_b$ on the LHS gives
\begin{align*}
    \text{IPM}_{\mathcal{F}_b}(P(b(X) \mid T=1), P(b(X) \mid T=0)) \leq \text{IPM}_{\mathcal{F}}(P(X|T=1), P(X|T=0)),
\end{align*}
concluding the proof.
\end{proof}

\begin{reptheorem}{th:wass_and_mmd}
Let $b$ be a function such that $\forall \B{x}, \ b(\B{x}) = W\B{x}$ for some matrix $W$ and $b(X)$ is a balancing score.
Let $||.||$ be the Euclidean norm on any vector space, and $|||.|||$ be a norm\footnote{Examples include the operator norm or the Euclidean norm.} on any matrix space such that $\forall \B{x}, A, ||A\B{x}|| \leq |||A||| \cdot ||\B{x}||$. 
Further, let $A^+$ be the Moore-Penrose pseudo-inverse of $A$, $\wass$ be the Wasserstein distance, $\mmd$ be the linear MMD\footnote{Note that these theoretical results also hold when $b(X)$ has a bias term.}. Let $W^+_\Sigma := \Sigma W^T (W\Sigma W^T)^+$. If $X$ is elliptical with covariance matrix $\Sigma$ then
\begin{align*}
&\frac{1}{|||W|||} \cdot \mmd\left(P(b(X) \mid T=1),P(b(X) \mid T=0)\right) \\
&\leq \mmd\left(P(X \mid T=1),P(X \mid T=0)\right) \\
&\leq |||W^+_\Sigma|||\cdot \mmd\left(P(b(X) \mid T=1),P(b(X) \mid T=0)\right)
\end{align*}
If $X$ is Gaussian with positive-definite covariance matrix $\Sigma$ and $W$ has full row rank then
\begin{align*}
&\frac{1}{|||W|||} \cdot \wass\left(P(b(X) \mid T=1),P(b(X) \mid T=0)\right) \\
&\leq \wass\left(P(X \mid T=1),P(X \mid T=0)\right) \\
&\leq |||W^+_\Sigma||| \cdot \wass\left(P(b(X) \mid T=1),P(b(X) \mid T=0)\right).
\end{align*}
\end{reptheorem}

\begin{proof}
We prove separately the bounds on the $\wass$ distance and on the $\mmd$.
\begin{itemize}
    \item First, note that for any random variable $V$,
    \begin{align*}
        \wass(P(V|T=1), P(V|T=0)) &= \sup_{f \ \text{1-Lipschitz}} \big|\mathbb{E}[f(V) \mid T=1] - \mathbb{E}[f(V) \mid T=0] \big| \\
        &= \text{IPM}_{\{f: \ f \ \text{1-Lipschitz} \}}( P(V|T=1), P(V|T=0) ),
    \end{align*}
    where IPM is defined in Equation \ref{eq:ipm}.
    \item Let $f$ be a $1$-Lipschitz function $f$ on the $\mathcal{B}$ space of $b(X)$, and define $g(x) = \frac{1}{|||W|||} f(Wx)$. The function $g$ is also 1-Lipschitz, since  for any $x,x'$,
    \begin{align*}
    |g(x) - g(x')| &= \frac{1}{|||W|||} |f(Wx) - f(Wx')| \\
    &\leq \frac{1}{|||W|||} ||Wx - Wx'|| \ \ \ \text{by 1-Lipschitzness of } f \\
    &= \frac{1}{|||W|||} ||W(x - x')|| \\
    &\leq  \frac{1}{|||W|||} \cdot |||W||| \cdot ||x - x'|| \\
    &= ||x - x'||.
    \end{align*}
    Thus,
        \begin{align*}
        \big|\mathbb{E}[f(b(X))|T=1] - \mathbb{E}[f(b(X))|T=0] \big|
        &= |||W||| \cdot \big|\mathbb{E}[g(X)|T=1] - \mathbb{E}[g(X)|T=0] \big| \\
        &\leq |||W||| \cdot \wass(P(X|T=1), P(X|T=0)).
    \end{align*}

    It follows that
    \begin{align*}
    \wass(P(b(X)|T=1),P(b(X)|T=0)) \leq |||W||| \cdot \wass(P(X|T=1),P(X|T=0)) .
    \end{align*}
    
    \item Now, let $f$ is a $1$-Lipschitz real-valued function on $\mathcal{X}$. We show that $g(\B{\beta}) = \mathbb{E}[f(X)|b(X) = \B{\beta}]$ is Lipschitz. First, as $\text{Cov}(X, WX) = \Sigma W^T$, $\text{Cov}(WX, X) = W\Sigma$, and $\text{Cov}(WX, WX) = W\Sigma W^T$ which is invertible as $W$ has full row rank, we have that for any $\B{\beta}$, 
    \begin{align*}
    X | WX = \B{\beta} \sim \mathcal{N}\left(\mu + \Sigma W^T (W\Sigma W^T)^{-1} (\B{\beta} - W\mu), \Sigma - \Sigma W^T (W\Sigma W^T)^{-1} W\Sigma \right).
    \end{align*}

    As a result, for any $\B{\beta}$,
    \begin{align*}
        g(\B{\beta}) = \mathbb{E}_{Z \in \mathcal{N}((I - \Sigma W^T (W\Sigma W^T)^{-1} W)\mu, \Sigma - \Sigma W^T (W\Sigma W^T)^{-1} W\Sigma)}[f(\Sigma W^T (W\Sigma W^T)^{-1} \B{\beta} + Z)]
    \end{align*}
    We simplify the notation of this expectation into $\mathbb{E}_Z[...]$. Note that, critically, the distribution of $Z$ does not depend on $\B{\beta}$.

    Then, for any $\B{\beta}, \B{\beta'}$,
    \begin{align*}
        |g(\B{\beta}) - g(\B{\beta'})|
        &= \left|\mathbb{E}_Z[f(\Sigma W^T (W\Sigma W^T)^{-1} \B{\beta} + Z)] - \mathbb{E}_Z[f(\Sigma W^T (W\Sigma W^T)^{-1} \B{\beta'} + Z)]\right| \\
        &= \left|\mathbb{E}_Z[f(\Sigma W^T (W\Sigma W^T)^{-1} \B{\beta} + Z) - f(\Sigma W^T (W\Sigma W^T)^{-1} \B{\beta'} + Z)]\right| \\
        &\leq  \mathbb{E}_Z[\left|f(\Sigma W^T (W\Sigma W^T)^{-1} \B{\beta} + Z) - f(\Sigma W^T (W\Sigma W^T)^{-1} \B{\beta'} + Z)\right|] \text{ from Jensen's inequality} \\
        &\leq \mathbb{E}_Z[||(\Sigma W^T (W\Sigma W^T)^{-1} \B{\beta} + Z) - (\Sigma W^T (W\Sigma W^T)^{-1} \B{\beta'} + Z)||] \ \ \ \text{from the 1-Lipschitzness of $f$} \\
        &=  \mathbb{E}_Z[||\Sigma W^T (W\Sigma W^T)^{-1} \B{\beta} - \Sigma W^T (W\Sigma W^T)^{-1} \B{\beta'}||] \\
        &= ||\Sigma W^T (W\Sigma W^T)^{-1} \B{\beta} - \Sigma W^T (W\Sigma W^T)^{-1} \B{\beta'}|| \\
        &= ||\Sigma W^T (W\Sigma W^T)^{-1}( \B{\beta} - \B{\beta'})|| \\
        &\leq |||\Sigma W^T (W\Sigma W^T)^{-1}||| \cdot ||\B{\beta} - \B{\beta'}|| 
    \end{align*}
    so $g$ is $|||\Sigma W^T (W\Sigma W^T)^{-1}|||$-Lipschitz. Therefore, as a consequence of Proposition \ref{th:ipm},
    \begin{align*}
    \wass(P(X|T=1),P(X|T=0)) \leq |||\Sigma W^T (W\Sigma W^T)^{-1}||| \cdot \wass(P(b(X)|T=1),P(b(X)|T=0)).
    \end{align*}
    \item For MMD, note that for any random variable $V$ :
    \begin{align*}
        \mmd(P(V|T=1), P(V|T=0)) &= \sup_{a \in \mathbb{R}^{\text{dim}(V)}, \ \ ||a|| \leq 1} \big|\mathbb{E}[a^TV|T=1] - \mathbb{E}[a^TV|T=0] \big| \\
        &= ||\mathbb{E}[V|T=1] - \mathbb{E}[V|T=0] ||
    \end{align*}

    \item We note that
    \begin{align*}
    &\mmd\left(P(b(X) \mid T=1),P(b(X) \mid T=0)\right) \\
    &= ||\mathbb{E}[b(X)|T=1] - \mathbb{E}[b(X)|T=0] || \\
    &= ||\mathbb{E}[WX|T=1] - \mathbb{E}[WX|T=0] || \\
    &= ||W(\mathbb{E}[X|T=1] - \mathbb{E}[X|T=0])|| \\
    &\leq |||W||| \cdot ||\mathbb{E}[X|T=1] - \mathbb{E}[X|T=0]|| \\
    &= |||W||| \cdot \mmd\left(P(X \mid T=1),P(X \mid T=0)\right) \\
    \end{align*}
    
    \item From Equation \ref{eq:tower_prop_balancing_score}, $\mathbb{E}[X|T=t] = \mathbb{E}\big[g(b(X)) \ | \ T=t\big]$, where $g(\B{\beta}) := \mathbb{E}[X|b(X) = \B{\beta}]$. From Section 2 of \citepAppendix{cambanis1981ottoecd}, if if $X$ is elliptical with location $\mu$ and covariance matrix $\Sigma$ then $\begin{pmatrix} I \\ W \end{pmatrix}X$ is elliptical with location $\begin{pmatrix} I \\ W \end{pmatrix}\mu$ and covariance matrix $\begin{pmatrix} \Sigma & \Sigma W^T \\ W\Sigma & W\Sigma W^T \end{pmatrix}$. Then, from Corollary 5 of \citepAppendix{cambanis1981ottoecd},
    \begin{align*}
    g(\B{\beta}) &= \mathbb{E}[X|WX = \B{\beta}] \\
    &= \mu + \Sigma W^T (W\Sigma W^T)^+(\B{\beta} - W\mu).
    \end{align*}
    Thus,
    \begin{align*}
        \mathbb{E}[X|T=t]
        &= \mathbb{E}\big[g(b(X)) \ | \ T=t\big] \\
        &= \mathbb{E}\big[\mu + \Sigma W^T (W\Sigma W^T)^+(b(X) - W\mu) \ | \ T=t\big] \\
        &= \Sigma W^T (W\Sigma W^T)^+\mathbb{E}[b(X) \ | \ T=t] + \left(I  - \Sigma W^T (W\Sigma W^T)^+W\right)\mu
    \end{align*}
    Then,
    \begin{align*}
    &\mmd\left(P(X \mid T=1),P(X \mid T=0)\right) \\
    &= || \mathbb{E}[X|T=1] - \mathbb{E}[X|T=0] || \\
    &= \Big|\Big|\Sigma W^T (W\Sigma W^T)^+\mathbb{E}[b(X) \ | \ T=1] + \left(I  - \Sigma W^T (W\Sigma W^T)^+W\right)\mu \\
    & \ \ \ \  - \left(\Sigma W^T (W\Sigma W^T)^+\mathbb{E}[b(X) \ | \ T=0] + \left(I  - \Sigma W^T (W\Sigma W^T)^+W\right)\mu\right) \Big|\Big| \\
    &= ||\Sigma W^T (W\Sigma W^T)^+(\mathbb{E}[b(X) \ | \ T=1] - \mathbb{E}[b(X) \ | \ T=0]) || \\
    &\leq |||\Sigma W^T (W\Sigma W^T)^+||| \cdot ||\mathbb{E}[b(X)|T=1] - \mathbb{E}[b(X)|T=0]|| \\
    &= |||\Sigma W^T (W\Sigma W^T)^+||| \cdot \mmd\left(P(b(X) \mid T=1),P(b(X) \mid T=0)\right). 
    \end{align*} 
\end{itemize} 
\end{proof}

\subsection{Bounds For Non-Balancing Scores}
\label{app:approx}

\begin{reptheorem}{th:approx}
Let
\begin{align*}
\mathcal{E}^D_{t,b}(\B{\beta}) := D\Big( P\big(X | b(X) = \B{\beta} , T=t\big), P\big(X |  b(X) = \B{\beta} \big) \Big)
\end{align*}
where $D$ is a probability discrepancy measure, $b$ is a function of $X$, $t \in \{0,1\}$ is a realisation of $T$, $\B{\beta}$ is a realisation of $b(X)$. For any function $b$,
     \begin{align*}
&TV\Big( P\big(b(X) | T=1\big), P\big(b(X) | T=0\big) \Big) \\
&\leq TV\Big( P\big(X | T=1\big), P\big(X | T=0\big) \Big) \\
& \leq TV\Big( P\big(b(X) | T=1\big), P\big(b(X) | T=0\big) \Big) \\
& + \mathbb{E}\big[\mathcal{E}^{TV}_{1,b}\big(b(X)\big) | T=1\big] + \mathbb{E}\big[\mathcal{E}^{TV}_{0,b}\big(b(X)\big) | T=0\big]
\end{align*}
and, using the notations of Proposition \ref{th:ipm},
\begin{align*}
    &\text{IPM}_{\mathcal{F}}\left(P(X \mid T=1), P(X \mid T=0)\right)\\
    &\leq \text{IPM}_{\mathcal{F}_b}\left(P(b(X) \mid T=1), P(b(X) \mid T=0)\right) \\
    &+ \mathbb{E}\big[\mathcal{E}^{\text{IPM}_{\mathcal{F}}}_{1,b}\big(b(X)\big) | T=1\big] + \mathbb{E}\big[\mathcal{E}^{\text{IPM}_{\mathcal{F}}}_{0,b}\big(b(X)\big) | T=0\big].
\end{align*}

For a linear function  $b(x) = Wx$, if $X$ is elliptical with covariance matrix $\Sigma$, then
\begin{align*}
&\frac{1}{|||W|||} \cdot
 \mmd\Big( P\big(b(X) | T=1\big), P\big(b(X) | T=0\big) \Big) \\
&\leq \mmd\Big( P\big(X | T=1\big), P\big(X | T=0\big) \Big) \\
& \leq |||W^+_\Sigma||| \cdot
 \mmd\Big( P\big(b(X) | T=1\big), P\big(b(X) | T=0\big) \Big) \\
 &
 + \mathbb{E}\big[\mathcal{E}^{\mmd}_{1,b}\big(b(X)\big) | T=1\big] + \mathbb{E}\big[\mathcal{E}^{\mmd}_{0,b}\big(b(X)\big) | T=0\big]
\end{align*}
and if $X$ is Gaussian with positive-definite covariance matrix $\Sigma$ while $W$ has full row rank, then
\begin{align*}
&\frac{1}{|||W|||} \cdot
 \wass\Big( P\big(b(X) | T=1\big), P\big(b(X) | T=0\big) \Big) \\
& \leq \wass\Big( P\big(X | T=1\big), P\big(X | T=0\big) \Big) \\
& \leq |||W^+_\Sigma||| \cdot
 \wass\Big( P\big(b(X) | T=1\big), P\big(b(X) | T=0\big) \Big) \\
 &
 + \mathbb{E}\big[\mathcal{E}^{\wass}_{1,b}\big(b(X)\big) | T=1\big] + \mathbb{E}\big[\mathcal{E}^{\wass}_{0,b}\big(b(X)\big) | T=0\big].
\end{align*}
\end{reptheorem}

\begin{proof}
The lower bounds were established in the previous Propositions, while the upper bounds follow as a corollary of the following Proposition. 
Indeed, the proofs Propositions \ref{th:tv_eq_bX}, \ref{th:ipm} and \ref{th:wass_and_mmd} directly show that Equation \eqref{eq:general_condition_approx} follows for their respective assumptions on classes of functions, distributions of $X$ and balancing scores. 
\end{proof}

\begin{theorem}  \label{th:intermediate_prop_1}
Let $b$ a function of $X$, $\mathcal{F}$ a class of functions of $X$. Assume that for some constant $C_b$ and some class of function $\mathcal{F}'_b$ of functions on the image space on $b$, both depending on $b$:
\begin{align} \label{eq:general_condition_approx}
    \forall f \in \mathcal{F}, \Big| \mathbb{E}\big[ \ \mathbb{E}[f(X) | b(X)]  \ \big| \ T = 1] - \mathbb{E} \big[ \ \mathbb{E}[f(X) | b(X)] \ \big| \ T = 0]  \Big| \leq C_b \cdot \text{IPM}_{\mathcal{F}'_b}\Big( P\big(b(X) | T=1\big), P\big(b(X) | T=0\big) \Big).
\end{align}
Then, letting $\mathcal{E}^D_{t,b}(\B{\beta}) = D\Big( P\big(X | b(X) = \B{\beta} , T=t\big), P\big(X |  b(X) = \B{\beta} \big) \Big)$ where $D$ is a probability distance, we have
\begin{align*}
\text{IPM}_\mathcal{F}\Big( P\big(X | T=1\big), P\big(X | T=0\big) \Big) \leq \ 
& C_b \cdot \text{IPM}_{\mathcal{F}'_b}\Big( P\big(b(X) | T=1\big), P\big(b(X) | T=0\big) \Big) \\
& + \mathbb{E}\big[\mathcal{E}^{\text{IPM}_\mathcal{F}}_{1,b}\big(b(X)\big) | T=1\big] + \mathbb{E}\big[\mathcal{E}^{\text{IPM}_\mathcal{F}}_{0,b}\big(b(X)\big) | T=0\big]
\end{align*}
\end{theorem}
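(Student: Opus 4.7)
\textit{Proof plan:} The plan is to start from the variational definition of the integral probability metric and, for each test function $f \in \mathcal{F}$, decompose the difference of conditional expectations into (i) a term that depends only on the distribution of $b(X)$ and (ii) two residual terms that quantify how far $X$ given $(b(X),T)$ is from $X$ given $b(X)$ alone. The assumption in Equation \eqref{eq:general_condition_approx} handles (i), and the definition of $\mathcal{E}^D_{t,b}$ handles (ii).

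First, I would fix an arbitrary $f \in \mathcal{F}$ and use the law of total expectation to write $\mathbb{E}[f(X)\mid T=t] = \mathbb{E}\big[\mathbb{E}[f(X)\mid b(X),T=t]\,\big|\,T=t\big]$. The key algebraic move is to add and subtract $\mathbb{E}\big[\mathbb{E}[f(X)\mid b(X)]\,\big|\,T=t\big]$, which gives
\begin{align*}
\mathbb{E}[f(X)\mid T=t] =\ & \mathbb{E}\big[\mathbb{E}[f(X)\mid b(X)]\,\big|\,T=t\big] \\
& + \mathbb{E}\big[\mathbb{E}[f(X)\mid b(X),T=t] - \mathbb{E}[f(X)\mid b(X)]\,\big|\,T=t\big].
\end{align*}
Taking the difference at $t=1$ and $t=0$ and applying the triangle inequality then splits $|\mathbb{E}[f(X)\mid T=1] - \mathbb{E}[f(X)\mid T=0]|$ into three pieces: one involving only $\mathbb{E}[f(X)\mid b(X)]$ evaluated under $P(b(X)\mid T=1)$ and $P(b(X)\mid T=0)$, and two residuals, one per treatment group.

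Next I would bound each piece. The first piece is $\leq C_b \cdot IPM_\mathcal{F}\big(P(b(X)\mid T=1), P(b(X)\mid T=0)\big)$ directly by the assumption \eqref{eq:general_condition_approx}. For the residuals, I would push the absolute value inside the outer expectation using Jensen's inequality, then observe that for each fixed $\B{\beta}$,
\begin{align*}
\big|\mathbb{E}[f(X)\mid b(X)=\B{\beta}, T=t] - \mathbb{E}[f(X)\mid b(X)=\B{\beta}]\big| \leq \sup_{g \in \mathcal{F}} \big|\mathbb{E}[g(X)\mid b(X)=\B{\beta}, T=t] - \mathbb{E}[g(X)\mid b(X)=\B{\beta}]\big|,
\end{align*}
and the right-hand side equals $\mathcal{E}^{IPM_\mathcal{F}}_{t,b}(\B{\beta})$ by definition. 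Integrating over $b(X) \mid T=t$ yields $\mathbb{E}\big[\mathcal{E}^{IPM_\mathcal{F}}_{t,b}(b(X))\,\big|\,T=t\big]$ as the bound for the $t$-th residual.

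Combining the three bounds gives the stated inequality for the fixed $f$, and since the bound does not depend on $f$, taking the supremum over $f \in \mathcal{F}$ on the left produces $IPM_\mathcal{F}(P(X\mid T=1), P(X\mid T=0))$ on the left and leaves the right-hand side unchanged. The main subtlety I anticipate is the residual step: one must be careful that the inner pointwise bound uses the supremum over $\mathcal{F}$ (so it is independent of $f$ and matches the IPM definition), and also that Jensen's inequality is applied correctly when the inner quantity may not be nonnegative before taking absolute values. Everything else is a clean decomposition and an application of the hypothesis.
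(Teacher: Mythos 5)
Your proposal is correct and follows essentially the same route as the paper's proof: the same law-of-total-expectation decomposition with $\mathbb{E}[f(X)\mid b(X)]$ added and subtracted, the same triangle-inequality split into one term controlled by the hypothesis \eqref{eq:general_condition_approx} and two residual terms, and the same Jensen-plus-supremum bound identifying each residual with $\mathbb{E}\big[\mathcal{E}^{IPM_\mathcal{F}}_{t,b}(b(X))\mid T=t\big]$, followed by a supremum over $f\in\mathcal{F}$. No gaps.
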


\begin{proof} 
Denote $\Delta_t(\B{\beta}; f) := \mathbb{E}\Big[ f(X) \Big| b(X) = \B{\beta}, T = t \Big] - \mathbb{E}\Big[ f(X) \Big| b(X) = \B{\beta} \Big]$, so that $\mathcal{E}^{\text{IPM}_\mathcal{F}}_{t,b}\big(\B{\beta}\big) = \sup_{f \in \mathcal{F}} \Big| \Delta_t(\B{\beta}; f) \Big|$.

We fix $f \in \mathcal{F}$, noting that
\begin{align*}
        \mathbb{E}[f(X)|T=t]
        &= \mathbb{E}\big[\mathbb{E}[f(X)|b(X),T=t] \ \mid \ T=t\big] \text{ due to the law of total expectation} \\
        &= \mathbb{E}\big[\Delta_t\big(b(X); f\big) + \mathbb{E}[f(X)|b(X)] \ \mid \ T=t\big] \\
        &= \mathbb{E}\big[\Delta_t\big(b(X); f\big) \ \mid \ T=t\big] \ + \ \mathbb{E}\big[\mathbb{E}[f(X)|b(X)] \ \mid \ T=t\big].
\end{align*}
As a consequence,
    \begin{align*}
        &\Big|\mathbb{E}[f(X)|T=1] - \mathbb{E}[f(X)|T=0]\Big| \\
        &= \Big| \ \mathbb{E}\big[\mathbb{E}[f(X)|b(X)]  \ \mid \ T=1\big] - \mathbb{E}\big[\mathbb{E}[f(X)|b(X)]  \ \mid \ T=0\big] + \mathbb{E}\big[\Delta_1\big(b(X); f\big) \ \mid \ T=1\big] \\
        & \ \ \ \ \ - \mathbb{E}\big[\Delta_0\big(b(X); f\big) \ \mid \ T=0\big] \ \Big| \\
        &\leq \Big| \ \mathbb{E}\big[\mathbb{E}[f(X)|b(X)]  \ \mid \ T=1\big] - \mathbb{E}\big[\mathbb{E}[f(X)|b(X)]  \ \mid \ T=0\big] \ \Big| \ + \ \Big| \ \mathbb{E}\big[\Delta_1\big(b(X); f\big) \ \mid \ T=1\big] \ \Big| \ \\
        & \ \ \ \ \ + \ \Big| \ \mathbb{E}\big[\Delta_0\big(b(X); f\big) \ \mid \ T=0\big] \ \Big|,
\end{align*}
where
\begin{align*}
\Big| \ \mathbb{E}\big[\mathbb{E}[f(X)|b(X)]  \ \mid \ T=1\big] - \mathbb{E}\big[\mathbb{E}[f(X)|b(X)]  \ \mid \ T=0\big] \ \Big| &\leq C_b \cdot \text{IPM}_{\mathcal{F}'_b}\Big( P\big(b(X) | T=1\big), P\big(b(X) | T=0\big) \Big) \\
& \ \ \ \ \ \ \ \ \ \text{by assumption}
\end{align*}
and, for $t \in \lbrace 0, 1 \rbrace$,
\begin{align*}
\Big| \ \mathbb{E}\big[\Delta_t\big(b(X); f\big) \ \mid \ T=t\big] \ \Big| \ &\leq  \ \mathbb{E}\big[ \ | \Delta_t\big(b(X); f\big) | \ \big| \ T=t\big] \\ &\leq \ \mathbb{E}\big[ \sup_{f \in \mathcal{F}} | \Delta_t\big(b(X); f\big) | \ \big| \ T=t\big] \\ &= \ \mathbb{E}\big[\mathcal{E}^{\text{IPM}_\mathcal{F}}_{t,b}\big(b(X)\big) | T=t \big].
\end{align*}
Thereby, for any $f \in \mathcal{F}$,
\begin{align*}
\Big|\mathbb{E}[f(X)|T=1] - \mathbb{E}[f(X)|T=0]\Big| \leq \ 
& C_b \cdot \text{IPM}_{\mathcal{F}'_b}\Big( P\big(b(X) | T=1\big), P\big(b(X) | T=0\big) \Big) \\
& + \mathbb{E}\big[\mathcal{E}^{\text{IPM}_\mathcal{F}}_{1,b}\big(b(X)\big) | T=1\big] + \mathbb{E}\big[\mathcal{E}^{\text{IPM}_\mathcal{F}}_{0,b}\big(b(X)\big) | T=0\big].
\end{align*}
Taking the supremum over $f \in \mathcal{F}$ yields the desired result.

\end{proof}

\section{A FEW NOTES ABOUT COMPUTATIONAL COMPLEXITIES OF BOUNDS}
\label{app:complexity_wass}

\paragraph{Computational complexity of bounds in Proposition \ref{th:wass_and_mmd}}{Denoting $N := N_t + N_c$, the computational complexity of the linear MMD estimator is in $\mathcal{O}(ND)$, and the computational complexity of the the Wasserstein distance estimator is in $\mathcal{O}(N^2D + N^3 \log N + N^3 \log D )$ when using the auction algorithm \citepMain{peyre2020computational, Bertsekas98networkoptimization}, assuming that covariates and balancing scores have bounded second-order moments; we refer to the paragraph below on ``Computational complexity of Wasserstein distance''. As a result, assuming that the balancing score is of dimension $d << D$ and that we have already computed the ground-truth balancing scores, these complexities can be decreased to $\mathcal{O}(Nd)$ and $\mathcal{O}(N^2d + N^3 \log N + N^3 \log d)$, respectively.  Thus, if we assume $d << D \sim N$, there is a clear decrease of the complexity for the linear MMD. The decrease is less stark for the Wasserstein distance, as the dominant $N^3 \log N$ term is untouched; however other terms are clearly decreased. 

The decrease of complexity should be nuanced if we compute the entire bounds of Proposition \ref{th:wass_and_mmd}, and not just the probability distances, as we have to (1) compute the constants $|||W|||$ and $|||W^+_\Sigma|||$, and (2) compute the balancing scores $(WX_i)_i$. We assume that $|||.|||$ is the operator norm. For (1), when $\Sigma = I$, then $W^+_\Sigma = W^+$, both constants can be handled simultaneously by computing the singular value decomposition of $W$, which has a complexity $\mathcal{O}(Dd^2)$ \citepAppendix{golub2013matrix, Vasudevan2017AHS}. For a general $\Sigma$, then $W^+_\Sigma = \Sigma W^T(W\Sigma W^T)^{+}$ and computing $|||W^+_\Sigma|||$ has a complexity $\mathcal{O}(D^2d + Dd^2 + d^3)$, as computing $\Sigma W^T$ (present twice in the formula of $W^+_\Sigma$) is in $\mathcal{O}(D^2d)$, further computing $W\Sigma W^T$ is in $\mathcal{O}(d^2D)$, computing the pseudo-inverse through computing the singular value decomposition is in $\mathcal{O}(d^3)$, deducing $W^+_\Sigma$ from both the inverse matrix and the already computed $\Sigma W^T$ is in $\mathcal{O}(Dd^2)$, and another singular value decomposition for the norm is in $\mathcal{O}(Dd^2)$. For (2), we further increase computational complexity by a term $\mathcal{O}(NDd)$ due to the additional matrix multiplication operations. As a result, when $d << D \sim N$, the bounds for the linear MMD imbalance actually have higher computational complexity than the original imbalance itself, while those for the Wasserstein distance imbalance have slightly lower computational complexity than the original imbalance itself.}

\paragraph{Computational complexity of Wasserstein distance}{More precisely, the computational complexity of the Wasserstein distance is $\mathcal{O}(N^2 D + \min\{N^3\log C_{\infty,X}, N^2 C_{\infty,X}^2 \log N \} )$, where the first term corresponds to computing the $L_2$ distance matrix wrt $X$, and the second term corresponds to the minimum of the computational complexities of the auction algorithm \citepAppendix{peyre2020computational, Bertsekas98networkoptimization} and Sinkhorn's algorithm \citepAppendix{dvurechensky2018computational}, assuming we choose the algorithm with the lowest complexity. $C_{\infty,X}$ is an upper bound of the maximal value of the distance matrix wrt $X$ and can further depend on $N$ and $D$. 
We assume covariates have a bounded second-order moment: noting $X_i$ covariates of treated units, $X'_j$ those of control units, $k$ the dimension index, we assume that $\forall i,k, \ \ \mathbb{E}[|X_i^k|^2] < M$ and $\forall j,k, \ \ \mathbb{E}[|X_j^{\prime k}|^2] < M$. Then 
\begin{align*}
\mathbb{E}[C_{\infty,X}]
    &= \mathbb{E}[\max_{i,j} ||X_i - X'_j||] \\
    &=  \mathbb{E}\left[\max_{i,j} \sqrt{\sum_{k=1}^D |X_i^k - X_j^{\prime k}|^2}\right] \\
    &=  \mathbb{E}\left[\sqrt{\max_{i,j} \sum_{k=1}^D |X_i^k - X_j^{\prime k}|^2}\right] \\
    &\leq \sqrt{\mathbb{E}\left[\max_{i,j} \sum_{k=1}^D |X_i^k - X_j^{\prime k}|^2\right] } \text{ from Jensen's inequality as $\sqrt{.}$ is concave} \\
    &\leq  \sqrt{\mathbb{E}\left[\sum_{k=1}^D \max_{i,j}  |X_i^k - X_j^{\prime k}|^2\right]} \\
    &\leq  \sqrt{\mathbb{E}\left[\sum_{k=1}^D \max_{i,j}  2(|X_i^k|^2 + |X_j^{\prime k}|^2)\right]}  \ \ \text{ from } (a-b)^2 \leq 2(a^2+b^2) \ \forall a,b \\
    &=  \sqrt{\mathbb{E}\left[\sum_{k=1}^D 2( \max_i |X_i^k|^2 + \max_j |X_j^{\prime k}|^2 )\right]} \\
    &= \sqrt{\sum_{k=1}^D 2\mathbb{E}\left[ \max_i |X_i^k|^2 + \max_j |X_j^{\prime k}|^2 \right]} \\
    &\leq  \sqrt{ 2\sum_{k=1}^D \mathbb{E}\left[ \sum_i |X_i^k|^2 + \sum_j |X_j^{\prime k}|^2 \right] }\\
    &=  \sqrt{  2\sum_{k=1}^D \big( \sum_i \mathbb{E}\left[|X_i^k|^2\right] + \sum_j \mathbb{E}\left[|X_j^{\prime k}|^2\right] \big) } \\
    &\leq \sqrt{2 \cdot D \cdot (N_t + N_c) \cdot M} \\
    &= \sqrt{2 \cdot D \cdot N \cdot M} \\
\end{align*}
so, from Jensen's inequality applied to the log function,
\begin{align*}
    \mathbb{E}[\log C_{\infty,X}] \leq \log \mathbb{E}[C_{\infty,X}] = \frac{1}{2} \cdot (\log 2 + \log N + \log D + \log M)
\end{align*}
and
\begin{align*}
    \mathbb{E}[C_{\infty,X}^2]
    &= \mathbb{E}\left[\left(\max_{i,j} ||X_i - X'_j||\right)^2\right] \\
    &=  \mathbb{E}\left[\left(\max_{i,j} \sqrt{\sum_{k=1}^D |X_i^k - X_j^{\prime k}|^2}\right)^2\right] \\
        &=  \mathbb{E}\left[\max_{i,j}\left( \sqrt{\sum_{k=1}^D |X_i^k - X_j^{\prime k}|^2}\right)^2\right] \\
&=  \mathbb{E}\left[\max_{i,j} \sum_{k=1}^D |X_i^k - X_j^{\prime k}|^2\right] \\
&\leq 2DNM. \\
\end{align*}
where we repeated the above expectations from after Jensen's inequality without the square root. 
 Thus, assuming $D \sim N$ or $D \leq N$ and substituting those complexities in expectation into the computational complexities above, the auction algorithm is in $\mathcal{O}(N^3 \log N + N^3 \log D )$ in expectation, and Sinkhorn's algorithm is in $\mathcal{O}(N^3D\log N)$ in expectation, so the auction algorithm might be preferable. \\

\section{IMPLEMENTATION DETAILS}
\label{app:Implementation Details}

\paragraph{ACIC 2016 Dataset. }{This dataset is taken from the ACIC competition of 2016 \citepAppendix{dorie2017acic}. 
Covariates were obtained from a study about developmental disorders, measuring data from pregnant women and their children. 
Treatment assignments and outcomes were synthetically generated from transformed versions of covariates using different data generating processes. 
Importantly, as treatments are synthetically generated, ground-truth propensity scores are made readily available, allowing us to compute calibration errors. 
We chose the provided data generating process setting number 4, which has polynomial treatment assignment, an exponential outcome model, $35\%$ of treated units, full overlap, and high treatment heterogeneity. 
To preprocess the data, categorical covariates with $F$ factors were converted to $F-1$ binary covariates, where the $f$-th binary covariate encodes factor $f+1$. 
Due to high heterogeneity between subjects, we also centered and scaled continuous covariates to improve performance of all models.
Binary covariates were left unprocessed. 
4802 subjects were present in the dataset. 
The subjects have 82 covariates after preprocessing (23 continuous and 59 binary). 
In our experiments, we considered 100 versions of this dataset, each corresponding to a different random seed for the data generating process.}

\paragraph{News Dataset. }{This dataset contains 5000 documents extracted from the NYT Corpus, where each of the $3477$ covariates represents counts of a word in news articles. 
The treatment indicator $T$ represents the use of a desktop ($T = 0$) or a mobile device ($T = 1$). 
The real-valued outcome $Y$ measures the opinion of the reader about the news article. 
Both treatments and outcomes are generated using a data generating process. 
Here, 50 random seeds from the data generating process are considered. 
In contrast to ACIC 2016, we did not choose these random seeds ourselves as they were already provided by the original authors\footnote{See ``News'' link in the ``Software and Data'' section here: https://www.fredjo.com/}~\citepAppendix{johansson2016learning}.}

\paragraph{IHDP Dataset.}{For this dataset, covariates and treatment assignments are used from 747 subjects in real-world data of the Infant Health Development Program. 
Outcomes, however, are synthetically generated. We further apply the same scaling of outcomes as in \citeAppendix{Curth2021nonparametric}, as the absence of scaling led to a few outliers causing very high ATT errors in all methods, making comparisons very challenging.
Here, 50 seeds from the data generating process are considered, directly used from the implementation of Dragonnet \citepAppendix{Shi2019adapting}. 25 covariates are present (9 are continuous, 16 are binary).
Experimental results on this dataset are presented in Appendix \ref{app:ihdp}.
}

\paragraph{Evaluation Metrics. }{To evaluate and compare experimental results, we use the following metrics: 
\begin{itemize}[noitemsep,topsep=0pt,parsep=0pt,partopsep=0pt,leftmargin=*]
    \item The \textit{calibration error}, defined as the mean absolute difference between the estimated and true propensity score. 
    This metric can only be computed when the true propensity score is assumed to be known in the dataset.
    The smaller the calibration error, the more suitable the estimated propensity score and estimated balancing scores obtained from a model are for matching, as we will be closer to the assumption that the propensity score is correctly estimated. Connecting the calibration error to the balancing error term in Proposition \ref{th:approx} is left for future work.
    \item The \textit{ATT error}, defined as the absolute difference between the ATT estimated by the method and a ground-truth ATT. 
    For every dataset, we compute the ground-truth ATT as the approximation from Equation~\eqref{eq:att_gt}, as we have access to the conditional expectations of $Y$.
    \item We empirically quantify \textit{sample imbalance} $\hat{I}$, defined as the squared Euclidean distance between sample means of covariates of treated and control groups from the %
    dataset $\mathcal{D}'$, which is obtained from the original dataset $\mathcal{D}$ via matching, or formally,
    \begin{align*}
        \hat{I} = \Big|\Big| \frac{1}{N_t}\sum_{i \in \mathcal{D} : T_i = 1} X_i - \frac{\sum_{j \in \mathcal{D} : T_j = 0} w_jX_j}{\sum_{j \in \mathcal{D} : T_j = 0} w_j}  \Big|\Big|_2^2, \nonumber
    \end{align*}
    where $N_t$ is the number of treated samples, and $w_j$ is the total weight of control sample $j$ after matching.
    As we can see from this equation, only the sample means of covariates from the control group may change due to matching%
    ; the sample means of covariates from the treated group remain unchanged.
    We note that this measure of imbalance is proportional to the squared linear MMD \citepAppendix{Sriperumbudur2012ipms}.
\end{itemize}
}

\paragraph{Data Splits. }{The neural networks were trained using a 60/20/20 training/validation/testing split. 
The benchmarks logistic regression-based propensity score estimate and PCA were trained using the combined training and validation sets. 
In-sample metrics were also computed on the combined training and validation datasets, and hold-out metrics were evaluated using the testing set. 
Alternatively, one might also use controls from the in-sample set when computing hold-out metrics. 
However, for simplicity of the definition of the hold-out imbalance, we preferred to just use controls from the testing set.}

\paragraph{Neural Architecture. }{The architecture of the neural networks used for matching is as follows : a low-dimensional layer corresponding to the multivariate balancing score (which we also call the "balancing score layer"), then wide hidden layers which are not used as balancing scores, and finally the propensity score head. This architecture is designed to focus on a linear balancing score as in Proposition \ref{th:wass_and_mmd} while keeping flexibility in the rest of the architecture to fit the propensity score model.}

\paragraph{Hyperparameters. }{To choose hyperparameters, we ran a grid search over the following hyperparameter values, minimising validation error on the first dataset version of ACIC 2016 (setting 4, as discussed above).}

\begin{itemize}
    \item Number of hidden layers in addition to the balancing score (hidden) layer: 1, 2.
 \item Number of hidden units per hidden layer (besides the balancing score layer): 100, 200, 300.
 \item Learning rate: $10^{-2}$, $10^{-3}$, $10^{-4}$.
 \item Weight decay: 0, 0.001, 0.01.
\end{itemize}
Other hyperparameters which we did not tune include a batch size of 100, and stochastic gradient descent with fixed learning rate as the optimiser. 
The chosen values by the hyperparameter search were 2 hidden layers besides the balancing score layer, 100 hidden units per hidden layer other than the balancing score layer, a learning rate of $10^{-2}$, weight decay with 0.01, and leaky ReLU as an activation. 
Additionally, on News datasets, the chosen hyperparameters caused the validation loss to diverge after a period of decrease, causing the training to fail. 
Thus, for this dataset, we used early stopping as a remedy.

\paragraph{Code. }{We provide our code to implement neural score matching and reproduce our main results at \textcolor{blue}{\href{https://github.com/oscarclivio/neuralscorematching}{\url{https://github.com/oscarclivio/neuralscorematching}}}.}

\paragraph{Resources and Assets. }{Experiments were run on a laptop with a GeForce GTX 1070 GPU with Max-Q Design for training models with neural networks, and on 12 CPU cores for other tasks. 
For all datasets, we used our own implementation of them in NumPy and PyTorch (after downloading the data in the case of ACIC 2016 and IHDP, as discussed above), and used our own PyTorch implementation for neural network training.}

\section{IHDP}
\label{app:ihdp}

In addition to the experimental results in the main paper, we also provide results for the IHDP dataset \citepAppendix{hill2011ihdp} in Table \ref{ihdp}. 
Boxplots are presented in Section \ref{app:boxplots}.

On IHDP, our method is not outperforming other methods. 
Plain covariates $\texttt{X}$ consistently rank as the best or second best method for each metric and setting (in-sample or hold-out). 
This might indicate that IHDP, which is a rather low-dimensional dataset with only 25 covariates, is not suited for dimensionality reduction methods, but further work should investigate these results. 
We also note that matching in the raw covariate space is probably facilitated by the fact that 16 of covariates are binary.

\begin{table}[ht!]
\centering
\caption{\label{ihdp} Results on the IHDP dataset.}
\setlength{\tabcolsep}{2pt}
\begin{tabular}{lcc}
\toprule
ATT errors & In-Sample & Hold-Out \\
\midrule
\texttt{NN Layer 1} (ours) & 0.156$\pm$0.005 & 0.311$\pm$0.011 \\
\texttt{NN PS} (ours) & 0.190$\pm$0.006 & 0.330$\pm$0.011 \\
\texttt{X} & 0.144$\pm$0.005 & 0.295$\pm$0.011 \\
\texttt{Random matching} & 0.216$\pm$0.007 & 0.342$\pm$0.012 \\
\texttt{LogReg PS} & 0.164$\pm$0.005 & 0.294$\pm$0.009 \\
\texttt{PCA} & 0.159$\pm$0.005 & 0.307$\pm$0.011 \\
\texttt{PCA + LogReg PS} & 0.146$\pm$0.005 & 0.372$\pm$0.011 \\
\midrule
Imbalances & In-Sample & Hold-Out \\
\midrule 
\texttt{NN Layer 1} (ours) & 0.159$\pm$0.005 & 0.442$\pm$0.009 \\
\texttt{NN PS} (ours) & 0.335$\pm$0.006 & 0.511$\pm$0.008 \\
\texttt{X} & 0.07$\pm$0.000 & 0.223$\pm$0.000 \\
\texttt{Random matching} & 0.592$\pm$0.006 & 0.658$\pm$0.012 \\
\texttt{LogReg PS} & 0.033$\pm$0.000 & 0.318$\pm$0.000 \\
\texttt{PCA} & 0.129$\pm$0.000 & 0.407$\pm$0.000 \\
\texttt{PCA + LogReg PS} & 0.137$\pm$0.001 & 0.909$\pm$0.003 \\
\texttt{No Matching} & 0.492$\pm$0.000 & 0.421$\pm$0.000 \\
\bottomrule
\end{tabular}
\end{table}

\section{BOXPLOTS OF ATT ERRORS AND IMBALANCES}
\label{app:boxplots}

We show boxplots corresponding to Tables \ref{tab:acic2016} to \ref{ihdp} in Figures~\ref{fig:acic2016_calibration_errors} to \ref{fig:ihdp_imbalances}. 
We provide boxplots with and without outliers. Outliers are defined as values above $Q3 + 1.5 \cdot IQ$ and below $Q1 - 1.5 \cdot IQ$ where $Q1, Q3, IQ$ are the lower quartile, the upper quartile and the interquartile range of the underlying data, respectively.

\section{SOCIETAL IMPACT}
\label{app:impact}

Possible positive societal impacts of our method include improving decision-making for various real-world applications in politics, economics or medicine. 
Possible negative societal impacts include the misuse of individualised treatment effect estimation to discriminate against individuals or groups, and of matching to identify protected characteristics of individuals or groups. 
To mitigate such impacts, we emphasise the importance of continued oversight and evaluation in the deployment of AI tools in society as well as the protection of data confidentiality via rigorous anonymisation, particularly with regards to protected characteristics.

\bibliographystyleAppendix{apalike}%
\bibliographyAppendix{references.bib}

\newpage

\begin{figure}[ht!]
    \centering
    \includegraphics[width=0.8\linewidth]{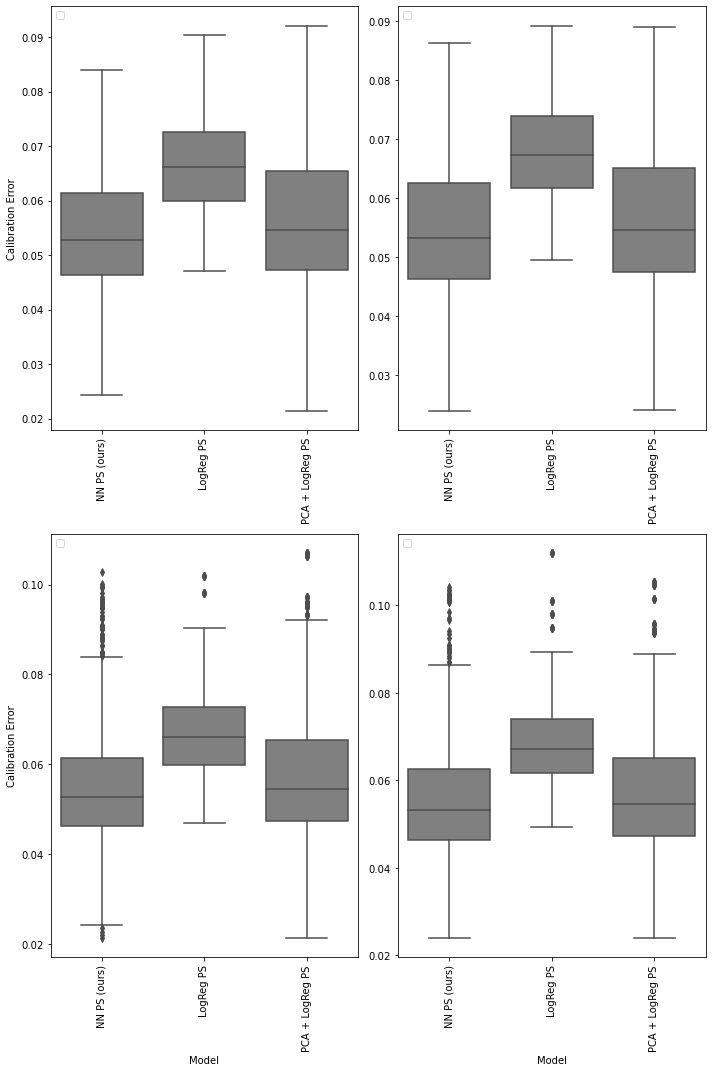}
    \caption{Calibration error boxplots on the ACIC2016 dataset: in-sample (left) and hold-out (right), without (up) and with (bottom) outliers.
    The data points underlying this figure refer to the average calibration error across a dataset version, corresponding to a single draw of the random seed, and a training seed.
    }
    \label{fig:acic2016_calibration_errors}
\end{figure}

\begin{figure}[ht!]
    \centering
    \includegraphics[width=0.8\linewidth]{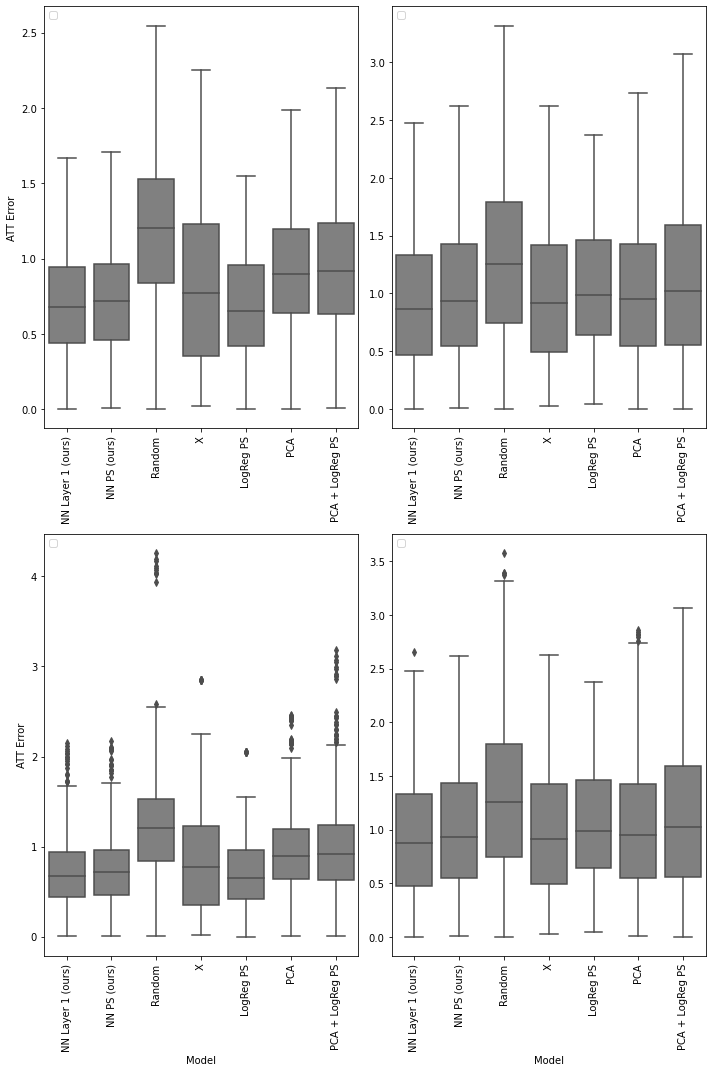}
    \caption{ATT error boxplots on the ACIC2016 dataset: in-sample (left) and hold-out (right), without (up) and with (bottom) outliers.
    The data points underlying this figure refer to the ATT computed on a dataset version, corresponding to a single draw of the random seed, and a training seed.}
    \label{fig:acic2016_att_errors}
\end{figure}

\begin{figure}[ht!]
    \centering
    \includegraphics[width=0.8\linewidth]{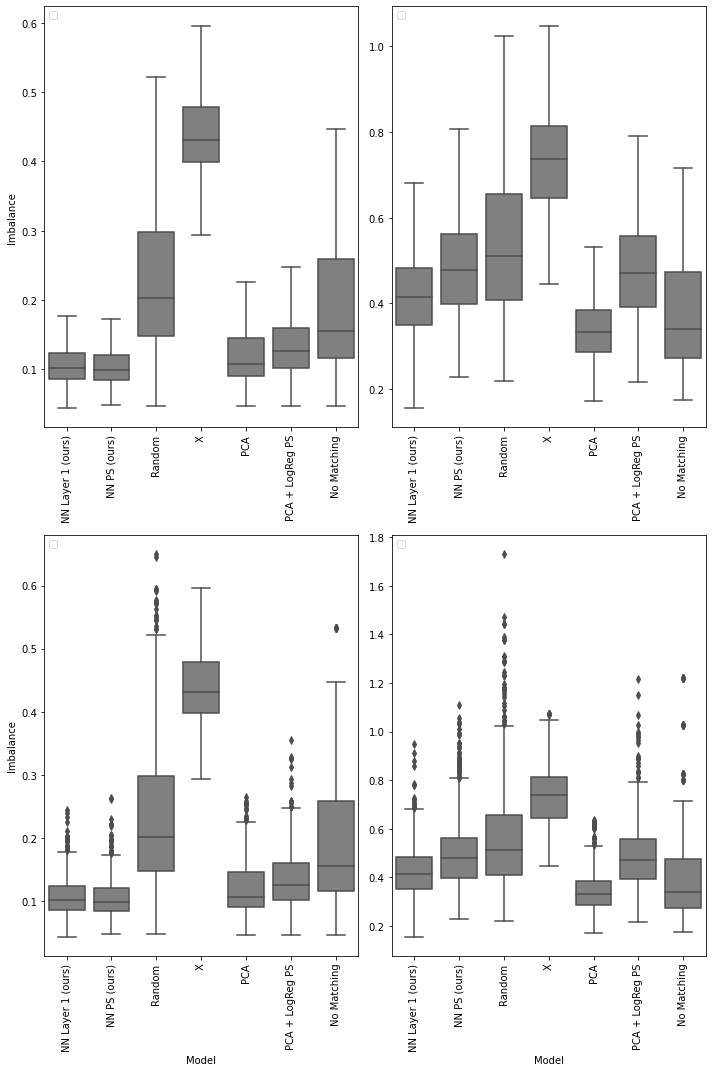}
    \caption{Sample imbalance boxplots on the ACIC2016 dataset: in-sample (left) and hold-out (right), without (up) and with (bottom) outliers.
    The data points underlying this figure refer to sample imbalance computed on a dataset version, corresponding to a single draw of the random seed, and a training seed.}
    \label{fig:acic2016_imbalances}
\end{figure}

\begin{figure}[ht!]
    \centering
    \includegraphics[width=0.8\linewidth]{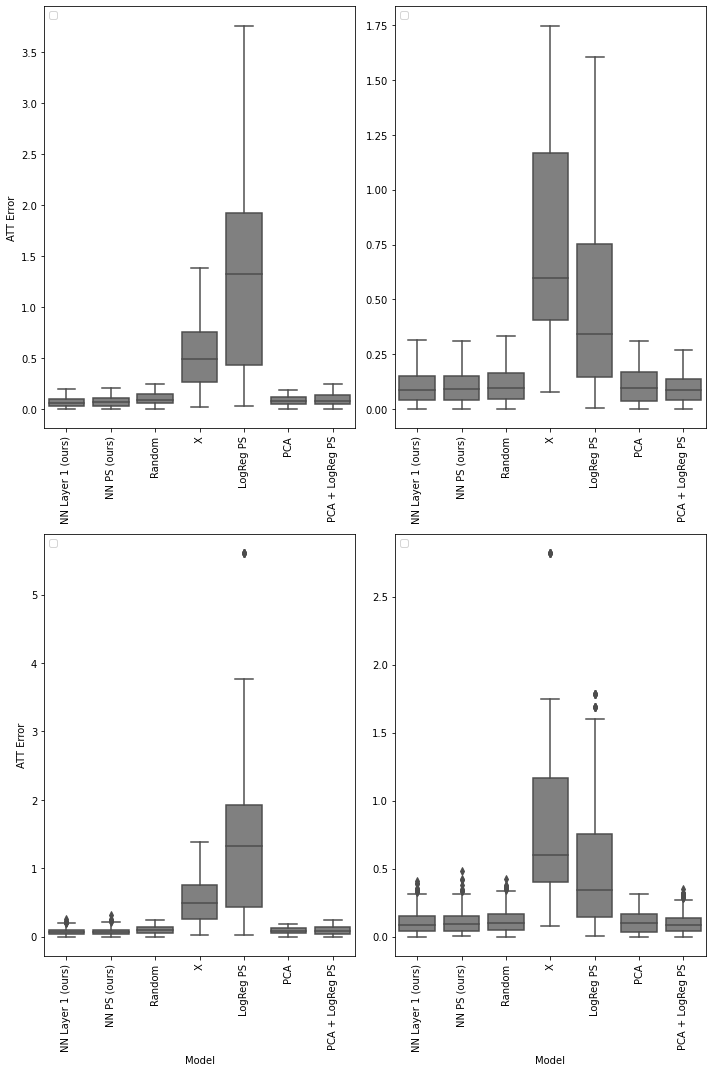}
    \caption{ATT error boxplots on the News dataset: in-sample (left) and hold-out (right), without (up) and with (bottom) outliers.
    The data points underlying this figure refer to the ATT computed on a dataset version, corresponding to a single draw of the random seed, and a training seed.}
    \label{fig:news_att_errors}
\end{figure}

\begin{figure}[ht!]
    \centering
    \includegraphics[width=0.8\linewidth]{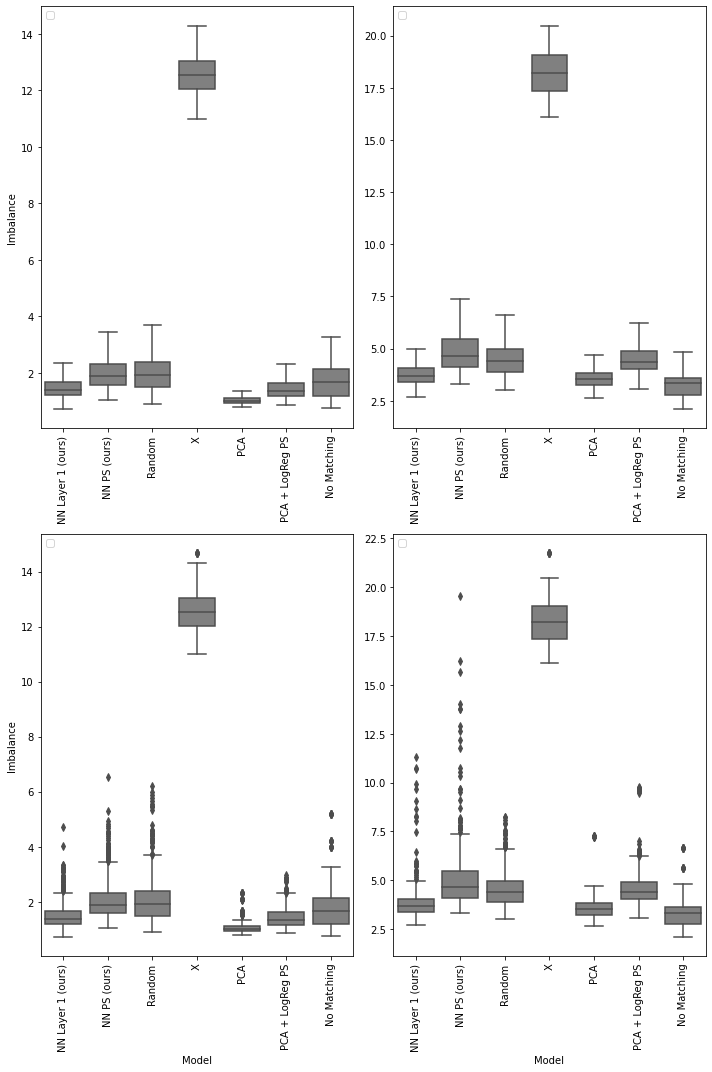}
    \caption{Sample imbalance boxplots on the News dataset: in-sample (left) and hold-out (right), without (up) and with (bottom) outliers.
    The data points underlying this figure refer to sample imbalance computed on a dataset version, corresponding to a single draw of the random seed,  and a training seed. Note that we do not show the boxplot for LogReg PS, whose exceptionally high values were hindering the readability of the Figure.}
    \label{fig:news_imbalances}
\end{figure}

\begin{figure}[ht!]
    \centering
    \includegraphics[width=0.8\linewidth]{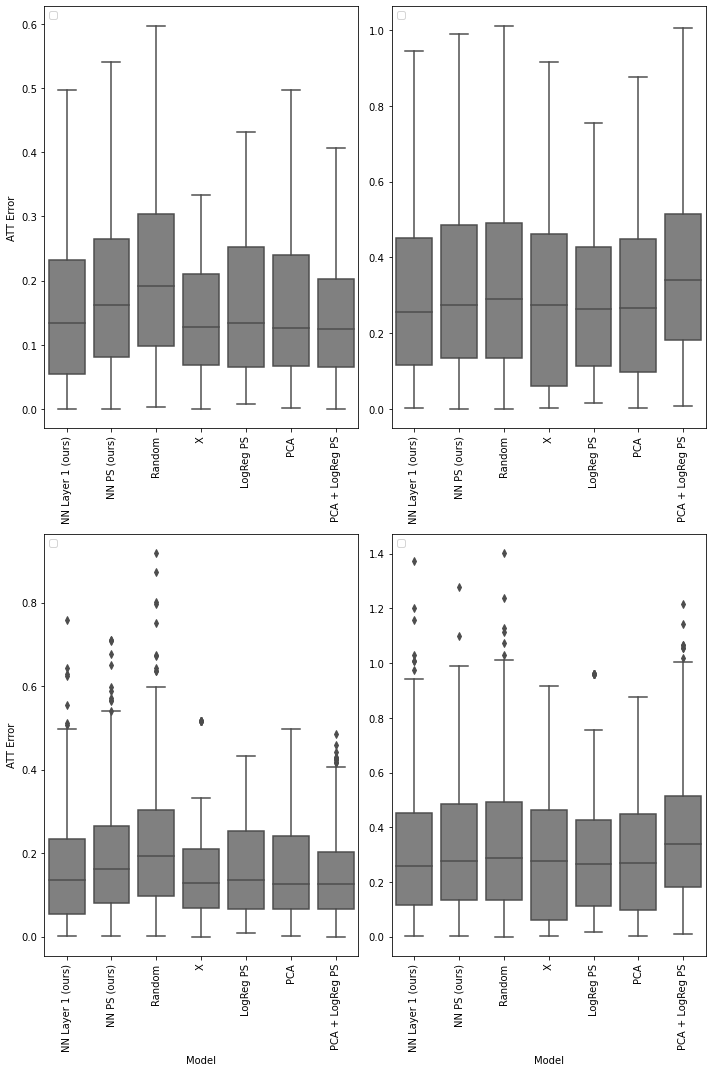}
    \caption{ATT error boxplots on the IHDP dataset: in-sample (left) and hold-out (right), without (up) and with (bottom) outliers.
    The data points underlying this figure refer to the ATT computed on a dataset version, corresponding to a single draw of the random seed,  and a training seed.}
    \label{fig:ihdp_att_errors}
\end{figure}

\begin{figure}[ht!]
    \centering
    \includegraphics[width=0.8\linewidth]{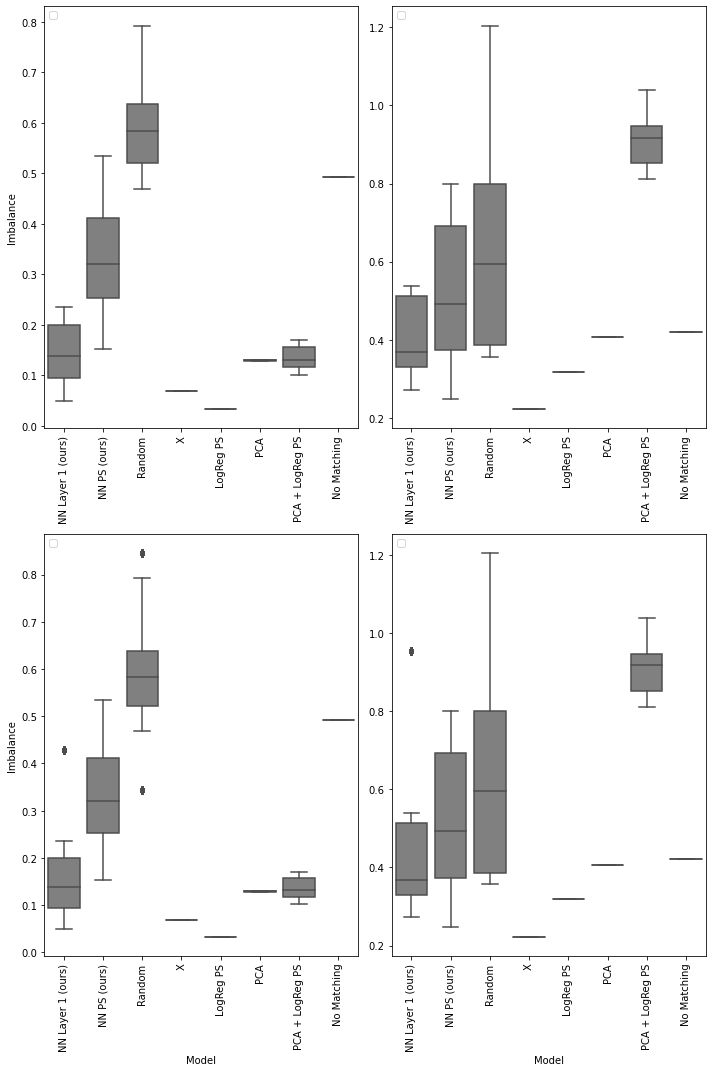}
    \caption{Sample imbalance boxplots on the IHDP dataset: in-sample (left) and hold-out (right), without (up) and with (bottom) outliers.
    The data points underlying this figure refer to sample imbalance computed on a dataset version, corresponding to a single draw of the random seed,  and a training seed.}
    \label{fig:ihdp_imbalances}
\end{figure}

\vfill

\end{document}